\newtheorem{theorem}{Proposition}[section]
\newtheorem{corollary}{Corollary}[theorem]
\newtheorem{lemma}[theorem]{Lemma}
\definecolor{LightCyan}{rgb}{0.88,1,1}
\definecolor{Gray}{gray}{0.9}
\DeclareRobustCommand{\VAN}[3]{#2}
\title{Autoregressive Policies for Continuous Control Deep Reinforcement Learning}
\author{
  Dmytro Korenkevych$^1$
 \and
A. Rupam Mahmood$^1$
 \and
 Gautham Vasan$^1$
 \and 
 James Bergstra$^1$
 \affiliations
$^1$Kindred AI
\emails
\{dmytro.korenkevych, rupam, gautham.vasan, james\}@kindred.ai 
}
\begin{document}

\maketitle

\begin{abstract}
Reinforcement learning algorithms rely on exploration to discover new behaviors, which is typically achieved by following a stochastic policy. In continuous control tasks, policies with a Gaussian distribution have been widely adopted. Gaussian exploration however does not result in smooth trajectories that generally correspond to safe and rewarding behaviors in practical tasks. In addition, Gaussian policies do not result in an effective exploration of an environment and become increasingly inefficient as the action rate increases. This contributes to a low sample efficiency often observed in learning continuous control tasks. We introduce a family of stationary autoregressive (AR) stochastic processes to facilitate exploration in continuous control domains. We show that proposed processes possess two desirable features: subsequent process observations are temporally coherent with continuously adjustable degree of coherence, and the process stationary distribution is standard normal. We derive an autoregressive policy (ARP) that implements such processes maintaining the standard agent-environment interface. We show how ARPs can be easily used with the existing off-the-shelf learning algorithms. Empirically we demonstrate that using ARPs results in improved exploration and sample efficiency in both simulated and real world domains, and, furthermore, provides smooth exploration trajectories that enable safe operation of robotic hardware. 
\end{abstract}

\section{Introduction}
Reinforcement Learning (RL) is a promising approach to solving complex real world tasks with physical robots, supported by recent successes \cite{andry2018learning,pmlr-v87-kalashnikov18a,haarnoja2018soft}. Exploration is an integral part of RL responsible for discovery of new behaviors. It is typically achieved by executing a stochastic behavior policy \cite{sutton2018reinforcement}. In continuous control domain, for instance, policies with a parametrized Gaussian distribution have been commonly used \cite{schulman2015trust,levine2016end,mnih2016asynchronous,schulman2017proximal,haarnoja2018soft}. The samples from such policies are temporally coherent only through the distribution mean. In most environments this coherence is not sufficient to provide consistent and effective exploration. In early stages of learning in particular, with randomly initialized policy parameters, exploration essentially relies on a white noise process around zero mean. In environments where actions represent low-level motion control, e.g. velocity or torque, such exploration rarely produces a consistent motion that could lead to discovery of rewarding behaviors. This contributes to low sample efficiency of learning algorithms \cite{wawrzynski2015control,van2017generalized,plappert2018multi}. For real world robotic applications a short reaction time is often desirable, however, as action rate increases, a white noise exploration model becomes even less viable, effectively locking the robot in place \cite{plappert2018multi}. In addition, temporally incoherent exploration results in a jerky movement on physical robots leading to hardware damage and safety concerns \cite{peters2007reinforcement}. 

 In this work we observe that the parameters of policy distribution typically exhibit high temporal coherence, particularly at higher action rates. Specifically, in the case of Gaussian policy distribution, the action can be represented as a sum of deterministic parametrized mean and a scaled white noise component. The mean part typically changes smoothly between subsequent states. It is the white noise part that results in inconsistent exploration behavior. We propose to replace the white noise component with a stationary autoregressive Gaussian process that has stationary standard normal distribution, while exhibiting temporal coherence between subsequent observations. We derive a general form of these processes of an arbitrary order and show how the degree of temporal coherence can be continuously adjusted with a scalar parameter. We demonstrate an advantage of higher orders processes compared to the first order ones used in prior work. Further, we propose an agent's policy structure that directly implements the autoregessive computation in such processes. Temporal action smoothing mechanism therefore is not hidden from the agent but is made explicit through its policy function. In order to achieve this, we require a fixed length history of past states and actions. However, the set of resulting history-dependent policies contains the set of Markov deterministic policies, and those contain the optimal policies in many tasks \cite[Section 4.4]{puterman2014markov}. We find that, in practical applications, \textit{the search} for such optimal policies can be more efficient and safe in a space of history-dependent stochastic policies with special structure, compared to conventional search in a space of Markov stochastic policies. 

Empirically we show that proposed autoregressive policies can be used with off-the-shelf learning algorithms and result in superior exploration and learning in sparse reward tasks compared to conventional Gaussian policies, while achieving similar or slightly better performance in tasks with dense reward. We also show that the drop in learning performance due to increasing action rate can be greatly mitigated by increasing the degree of temporal coherence in the underlying autoregressive process. In the real world robotic experiments we demonstrate that the autoregressive policies result in a  smoother and safer movement. \footnote{See accompanying video at \url{https://youtu.be/NCpyXBNqNmw}}

\section{Related Work}

The problem of exploration in Reinforcement Learning has been studied extensively. One approach has been to modify the environment by changing its reward function to make it easier for an agent to obtain any rewards or to encourage the agent to visit new environment states. This approach includes work on reward shaping \cite{ng1999policy} and auxiliary reward components such as curiosity \cite{oudeyer2007intrinsic,houthooft2016vime,pathak2017curiosity,burda2018large}. Note that regardless of chosen reward function temporally consistent behavior would still be beneficial in most tasks as it would discover rewarding behaviors more efficiently. A randomly initialized agent is unaware of the reward function and for example will not exhibit curiosity until after some amount of learning, which already requires visiting new states and discovering rewarding behaviors in the first place.

A second approach, particularly common in practical robotic applications, has been to directly enforce temporal coherence between subsequent motion commands. In the most simple case a low-pass filter is applied, e.g. the agent actions are exponentially averaged over the fixed or infinite length window \cite{benbrahim1997biped}. A similar alternative is to employ a derivative control where agent's actions represent higher order derivatives of the control signal \cite{mahmood2018setting}. Both of these approaches correspond to acting in a modified MDP with different state and action spaces and result in a less direct connection between agent's action and its consequence in the environment, which can make the learning problem harder. They also make the process less observable unless the agent has access to the history of past actions used in smoothing and to the structure of a smoothing mechanism itself, which is typically not the case. 
As in the case with modified reward function, the optimal policies in the new MDP generally may not correspond to the optimal policies in the original MDP. 

A third approach has been to learn parameters of predefined parametrized controllers, such as motor primitives, instead of learning control directly in the actuation space \cite{peters2008reinforcement}. This approach is attractive, as it allows to ensure safe robot behaviour and often results in an easier learning problem \cite{van2017generalized}. However, it requires expert knowledge to define appropriate class of controllers and limits possible policies to those, representable within the selected class. In complex tasks \cite{andry2018learning,pmlr-v87-kalashnikov18a} it may be non-trivial to design a sufficiently rich primitives set.

Several studies have considered applying exploration noise to policy distribution parameters such as network weights and hidden units activations. Plappert \textit{et al.} \shortcite{plappert2017parameter} applied Gaussian exploration noise to policy parameters at the beginning of each episode, demonstrating a more coherent and efficient exploration behavior compared to only adding Gaussian noise to the action itself. Fortunato \textit{et al.} \shortcite{fortunato2017noisy} similarly applied independent Gaussian noise to policy parameters, where the scale of the noise was also learned via gradient descent. Both of these works demonstrated improved learning performance compared to baseline Gaussian action space exploration, in particular in tasks with sparse rewards. Our approach is fully complimentary to auxiliary rewards and parametric noise ideas, as both still rely on exploration noise in the action space in addition to other noise sources and can benefit from consistent and temporally smooth exploration trajectories provided by our method.

In the context of continuous control deep RL our work is most closely related to the use of temporally coherent Gaussian noise during exploration. Wawrzynski \shortcite{wawrzynski2015control} used moving average process for exploration where temporal smoothness of exploration trajectories was determined by the integer size of an averaging window. They showed that learning with such process results in a similar final performance as with standard Gaussian exploration, while providing smoother behavior suitable for physical hardware applications. Hoof~\textit{et~al.}~\shortcite{van2017generalized} proposed a stationary first order AR exploration process in parameters space.
Lillicrap \textit{et al.} \shortcite{lillicrap2015continuous} and Tallec \textit{et al.} \shortcite{tallec2019making} used Ornstein–Uhlenbeck (OU) process for exploration in off-policy learning. The latter work showed that adjusting process parameters according to the time step duration helps to maintain exploration performance at higher action rates. It can be shown that in a discrete time form OU process is a first order Gaussian AR process, which makes it a particular case of our model. AR processes derived in this work generalize the processes used in these studies, providing a wider space of possible exploration trajectories. In addition, the current work proposes policy structure that directly implements autoregressive computation, in contrast to the above studies, where the agent was unaware of the noise structure. Due to this explicit policy formulation, autoregressive exploration can be used in both, on-policy and off-policy learning.

Autoregressive architectures have been proposed in the context of high-dimensional discrete or discretized continuous action spaces \cite{metz2017discrete,vinyals2017starcraft} with regression defined over action components. The objective of such architectures was to reduce dimensionality of the action space. In contrast, we draw on autoregressive stochastic processes literature, and define regression over time steps directly in a multidimensional continuous action space with the objective of enforcing temporally coherent behaviour. 

\section{Background}
\label{sec:bg}
\subsection*{Reinforcement Learning}
Reinforcement Learning (RL) framework \cite{sutton2018reinforcement} describes an agent interacting with an environment at discrete  time steps. At each step $t$ the agent receives the environment state $s_t \in \mathcal{S}$ and a scalar reward signal $r_{t} \in R$. The agent selects an action $a_t \in \mathcal{A}$ according to a policy defined by a probability distribution 
$\pi(a|s) \coloneqq P\left\{ a_t=a | s_t=s \right\}$. At the next time step $t + 1$ in part due to the agent's action, the environment transitions to a new state $s_{t+1}$ and produces a new reward $r_{t+1}$ according to a transition probability distribution  $p(s^{\prime}, r|s, a) \coloneqq \Pr\left\{ s_{t+1} = s^{\prime}, r_{t+1}=r | s_t = s, a_t = a \right\}$.
The objective of the agent is to find a policy that maximizes the expected return defined as the future  accumulated rewards $G_t \coloneqq \sum_{k=t}^\infty \gamma^{k-t} r_{k+1}$, where $\gamma\in[0,1]$ is a discount factor.
In practice, the agent observes the environment's state partially through a real-valued observation vector $o_t$.

\subsection*{Autoregressive processes}

An autoregressive process of order $p \in \mathbb{N}$ (AR-$p$) is defined as

\begin{eqnarray}
\label{arp}
X_t &= \sum_{k = 1}^p \phi_k X_{t-k} + Z_t,
\end{eqnarray}
where $\phi_k \in R,\ k = 1, \dots, p$ are real coefficients, and $Z_t$ is a white noise with zero mean and finite variance,  $Z_t~\sim~ \text{WN}(0, \sigma_Z^2),\ \sigma_Z^2 < \infty$.

An autoregressive process $\{X_t\}$ is called weakly \textit{stationary}, if its mean function $\mu_X(t) = \mathbb{E}[X_t]$ is independent of $t$ and its covariance function $\gamma_X(t + h, t) = \text{cov}(X_{t + h}, X_t)$ is independent of $t$ for each $h$. In the future we will use the term stationary implying this definition. The process (\ref{arp})  is stationary if the roots $G_i, i = 1, \dots, p$ (possibly complex) of its \textit{characteristic polynomial} 

$$P(z) \coloneqq z^p - \sum_{i = 1}^p\phi_i z^{p-i}$$ lie within a unit circle, e.g. $|G_i| < 1, i = 1, \dots, p$ (see e.g. \cite[Section 3.1]{brockwell2002introduction}). 

An autocovariance function $\gamma_k$ is defined as $\gamma_k = \text{cov}(X_t, X_{t - k}),\ k = 0, \pm 1, \pm 2, \ldots$. From definition, $\gamma_0 = \text{var}(X_t) = \sigma_X^2$. For a stationary AR-$p$ process a linear system of Yule-Walker equations holds:

\begin{equation}
\label{yw}
\begin{aligned}
& \begin{bmatrix}
\gamma_1 \\
\gamma_2 \\
\gamma_3 \\
\vdots \\
\gamma_p
\end{bmatrix}  = 
\begin{bmatrix}
\gamma_0 & \gamma_1 & \ldots & \gamma_{p-1} \\
\gamma_1 & \gamma_0 & \ldots & \gamma_{p-2} \\
\gamma_2 & \gamma_1 & \ldots & \gamma_{p-3} \\
\vdots & \vdots & \ddots & \vdots \\
\gamma_{p-1} & \gamma_{p-2} & \ldots & \gamma_{0} \\
\end{bmatrix} 
\begin{bmatrix}
\phi_1 \\
\phi_2 \\
\phi_3 \\
\vdots \\
\phi_p
\end{bmatrix}\\
\text{and}& \\
& \gamma_0 = \sum_{i = 1}^p \phi_i \gamma_i + \sigma_Z^2.
\end{aligned}
\end{equation}

The system (\ref{yw}) has a unique solution with respect to the variables $\{\gamma_k\},\ k = 0, \ldots, p$.
\section{Stationary autoregressive Gaussian processses}

In this section we derive a family of stationary AR-$p$ Gaussian processes for any $p \in \mathbb{N}$, such that $X_t \sim \mathcal{N}(0, 1)\ \forall t$, meaning $X_t$ has a marginal standard normal distribution at each $t$. We also show how the degree of temporal smoothness of trajectories formed by subsequent observations of such processes can be continuously tuned with a scalar parameter.

\begin{theorem}
\label{theorem}
For any $p \in \mathbb{N}$ and for any $\alpha_k \in [0, 1),\\ k~=~1, \dots, p$ consider a set of coefficients
\begin{equation}
\label{coeffs}
\begin{aligned}
\{\tilde{\phi}_k\}_{k=0}^p &= (-1)^{k + 1}\sum\limits_{1 
\le i_1 < i_2 \dots < i_k \le p} \alpha_{i_1} \alpha_{i_2} \dots \alpha_{i_k}.
\end{aligned}
\end{equation}
The Yule-Walker system (\ref{yw}) with coefficients $\{\tilde{\phi}_k\}$ has a unique solution with respect to $(\tilde{\gamma}_0, \tilde{\gamma}_1, \ldots, \tilde{\gamma}_p,\tilde{\sigma}_Z^2)$, such that $\tilde{\gamma}_0 = 1$ and $\tilde{\sigma}_Z^2 > 0$.  
Furthermore, the autoregressive process
\begin{equation}
\label{arpn}
\begin{aligned}
X_t &= \sum_{k = 1}^p \tilde{\phi}_k X_{t-k} + Z_t\\
Z_t &\sim \mathcal{N}(0, \tilde{\sigma}_Z^2),
\end{aligned}
\end{equation}
is a stationary Gaussian process with zero mean and unit variance, meaning $X_t \sim \mathcal{N}(0, 1) \ \forall t$.
\end{theorem}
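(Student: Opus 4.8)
The plan is to recognize the coefficients (\ref{coeffs}) as those produced by Vieta's formulas for a polynomial whose roots are exactly $\alpha_1,\dots,\alpha_p$, which makes stationarity immediate; then to obtain the autocovariances and the noise variance by a scaling argument on the Yule--Walker system; and finally to invoke the causal linear-process representation for Gaussianity and the moment values.

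First I would establish stationarity. Expanding $\prod_{i=1}^p (z - \alpha_i) = \sum_{k=0}^p (-1)^k e_k(\alpha_1,\dots,\alpha_p)\, z^{p-k}$, where $e_k$ is the $k$-th elementary symmetric polynomial, and matching coefficients against $P(z) = z^p - \sum_{i=1}^p \phi_i z^{p-i}$, one reads off $\phi_k = (-1)^{k+1} e_k$, which is precisely (\ref{coeffs}). Hence the characteristic polynomial of (\ref{arpn}) factors as $P(z) = \prod_{i=1}^p (z - \alpha_i)$, so its roots are $G_i = \alpha_i$. Since each $\alpha_i \in [0,1)$ gives $|G_i| < 1$, the stationarity criterion quoted above applies and the process is weakly stationary.

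Next I would handle the normalized Yule--Walker system. The key observation is that the $p$ matrix equations in (\ref{yw}) are linear and homogeneous in $(\gamma_0,\dots,\gamma_p)$, while the last equation is linear with the inhomogeneous term $\sigma_Z^2$; consequently the map $(\gamma_k)_k \mapsto (c\,\gamma_k)_k$, $\sigma_Z^2 \mapsto c\,\sigma_Z^2$ carries solutions to solutions for any $c > 0$. Starting from the unique solution guaranteed by the stated result for $\sigma_Z^2 = 1$ — whose $\gamma_0 = \mathrm{var}(X_t) > 0$ because the stationary solution is a nondegenerate linear process (its moving-average representation has leading coefficient one) — I would rescale by $c = 1/\gamma_0$ to obtain a solution with $\tilde\gamma_0 = 1$ and $\tilde\sigma_Z^2 = 1/\gamma_0 > 0$. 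Uniqueness under $\tilde\gamma_0 = 1$ follows because the autocorrelations $\gamma_k/\gamma_0$ are already pinned down by the first $p$ (standard, uniquely solvable) Yule--Walker equations, after which the last equation fixes $\tilde\sigma_Z^2$ uniquely.

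Finally I would establish Gaussianity and the moments. Because all roots satisfy $|\alpha_i| < 1$, the process admits a causal convergent representation $X_t = \sum_{j\ge 0} \psi_j Z_{t-j}$ with $Z_t \sim \mathcal{N}(0,\tilde\sigma_Z^2)$ i.i.d.; each finite-dimensional distribution is then an $L^2$ limit of Gaussians and hence Gaussian, so $\{X_t\}$ is a Gaussian process. Taking expectations in (\ref{arpn}) gives $\mu\,(1 - \sum_k \tilde\phi_k) = 0$, and since $1 - \sum_k \tilde\phi_k = P(1) = \prod_i (1-\alpha_i) > 0$ the mean is zero; the variance is $\tilde\gamma_0 = 1$ from the previous step, so $X_t \sim \mathcal{N}(0,1)$ for all $t$. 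I expect the only genuinely delicate point to be the second step: making the reformulation of (\ref{yw}) with $\gamma_0$ fixed and $\sigma_Z^2$ unknown precise, and in particular arguing $\tilde\sigma_Z^2 > 0$ (equivalently $\gamma_0 > 0$) rather than merely nonnegative — the homogeneity/scaling observation is exactly what makes both existence and strict positivity fall out cleanly.
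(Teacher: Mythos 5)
Your proposal is correct and follows essentially the same route as the paper's proof: Vieta's formulas identify $\{\tilde{\phi}_k\}$ with the coefficients of $\prod_i(z-\alpha_i)$ so all characteristic roots lie in the unit disc, the homogeneity of the Yule--Walker system in $(\gamma_0,\ldots,\gamma_p,\sigma_Z^2)$ plus $\gamma_0>0$ yields the rescaled solution with $\tilde{\gamma}_0=1$ and $\tilde{\sigma}_Z^2>0$, and the zero mean follows from $P(1)=\prod_i(1-\alpha_i)\neq 0$. Your explicit justification of Gaussianity via the causal $\mathrm{MA}(\infty)$ representation and of $\gamma_0>0$ via its leading coefficient is slightly more careful than the paper's, which asserts these points, but it is the same argument in substance.
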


\begin{proof}
The proof follows from the observation that $\{\tilde{\phi}_k\}$ are coefficients of a polynomial $P(z) = (z - \alpha_1)(z - \alpha_2)\ldots(z - \alpha_p)$ with roots $\{\alpha_k\}$ that all lie within a unit circle. Since $P(z)$ is a characteristic polynomial of a process ($\ref{arpn}$), the process is stationary. The existence of a unique solution to the system (\ref{yw}) with $\tilde{\gamma}_0 = 1,\ \tilde{\sigma}_Z^2 > 0$ follows from the observation that for any $\tilde{\sigma}_Z^2 > 0$ the system (\ref{yw}) has a unique solution with respect to $(\tilde{\gamma}_0, \tilde{\gamma}_1, \ldots, \tilde{\gamma}_p)$, while it is homogeneous with respect to $(\tilde{\gamma}_0, \tilde{\gamma}_1, \ldots, \tilde{\gamma}_p,\tilde{\sigma}_Z^2)$. The result then follows from observing that $\tilde{\gamma}_0 > 0$, and scaling the solution $(\tilde{\gamma}_0, \tilde{\gamma}_1, \ldots, \tilde{\gamma}_p,\tilde{\sigma}_Z^2)$ by $1/\tilde{\gamma}_0$. A complete proof can be found in Appendix \ref{app:proof}.
\end{proof}

\begin{corollary}
For any $p \in \mathbb{N}$ and for any $\alpha \in [0, 1)$ let $\{X_t\}$ be the autoregressive process:
\begin{equation}
\label{arpn2}
\begin{aligned}
X_t &= \sum_{k = 1}^p \tilde{\phi}_k X_{t-k} + Z_t\\
\tilde{\phi}_k &= (-1)^{k + 1}{p\choose k} \alpha^k\\ 
Z_t &\sim \mathcal{N}(0, \tilde{\sigma}_Z^2),
\end{aligned}
\end{equation}
where ${p\choose k} = \frac{p!}{k!(p - k)!}$ is a binomial coefficient, $\tilde{\sigma}_Z^2$ is a solution to the system (\ref{yw}) with $\{\phi_k = \tilde{\phi}_k\}_{k=1}^p$ and $\gamma_0=1$. Then $X_t \sim \mathcal{N}(0, 1) \ \forall t$.
\end{corollary}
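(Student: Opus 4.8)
The plan is to obtain this corollary as an immediate specialization of Proposition~\ref{theorem}, rather than re-deriving anything from scratch. The central observation is that the coefficient formula appearing in the corollary is precisely what the general formula \eqref{coeffs} becomes when all the parameters $\alpha_1, \dots, \alpha_p$ are taken equal to a single common value $\alpha$. So first I would set $\alpha_1 = \alpha_2 = \dots = \alpha_p = \alpha$ and verify that the elementary symmetric sum collapses to the claimed binomial form.

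Concretely, the inner sum in \eqref{coeffs} ranges over all strictly increasing index tuples $1 \le i_1 < i_2 < \dots < i_k \le p$, and when every $\alpha_{i_j}$ equals $\alpha$ each such tuple contributes exactly $\alpha^k$. Since the number of these tuples is the number of $k$-element subsets of $\{1, \dots, p\}$, namely $\binom{p}{k}$, the sum equals $\binom{p}{k}\alpha^k$, giving
\begin{equation*}
\tilde{\phi}_k = (-1)^{k+1}\binom{p}{k}\alpha^k,
\end{equation*}
which is exactly the coefficient definition used in \eqref{arpn2}. This is the only computation the argument requires, and it is a routine counting step rather than a genuine obstacle.

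It then remains only to check that the hypotheses of Proposition~\ref{theorem} hold under the corollary's assumptions. The corollary supposes $\alpha \in [0, 1)$, so taking $\alpha_k = \alpha$ for every $k$ gives $\alpha_k \in [0, 1)$ for all $k$, which is precisely the requirement of the Proposition. Applying the Proposition to this choice of $\{\alpha_k\}$ then yields both that the system \eqref{yw} admits a unique solution with $\tilde{\gamma}_0 = 1$ and $\tilde{\sigma}_Z^2 > 0$, and that the resulting process \eqref{arpn}---identical to \eqref{arpn2} under the substitution above---is a stationary Gaussian process with $X_t \sim \mathcal{N}(0, 1)$ for all $t$. Since the corollary's definition of $\tilde{\sigma}_Z^2$ as the solution of \eqref{yw} with $\gamma_0 = 1$ coincides with the $\tilde{\sigma}_Z^2$ produced by the Proposition, the two process specifications agree and the conclusion follows directly. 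The only point warranting care is confirming that the specialization is consistent at the level of the Yule--Walker solution, but this is automatic because \eqref{yw} depends on the $\alpha$'s solely through the coefficients $\{\tilde{\phi}_k\}$, which are identical in both formulations.
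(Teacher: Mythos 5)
Your proposal is correct and follows exactly the paper's own argument: specialize Proposition~\ref{theorem} by setting $\alpha_k = \alpha$ for all $k$, so that the elementary symmetric sums in \eqref{coeffs} collapse to $\binom{p}{k}\alpha^k$. The paper states this in one line; you have merely spelled out the routine counting step explicitly.
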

\begin{proof}
Set $\alpha_k = \alpha \ \forall k$ in (\ref{coeffs}) and apply Proposition \ref{theorem}.
\end{proof}
\noindent An example of a third order process AR-3 defined by (\ref{arpn}) is given in Appendix \ref{app:ar3}.

Proposition \ref{theorem} allows to formulate stationary autoregressive processes of an arbitrary order $p$ for arbitrary values $\alpha_k \in [0, 1), k = 1, \dots, p$, such that the marginal distributions of realizations of these processes are standard normal at each time step. This gives us great flexibility and power in defining properties of these processes, such as the degree of temporal coherence between process realizations at various time lags. If we were to use these processes as a source of exploration behavior in RL algorithms, this flexibility would translate into a flexibility in defining the shape and smoothness of exploration trajectories. Note, that the process (\ref{arpn}) trivially generalizes to a vector form by defining $Z_t$ a multivariate white noise with a diagonal covariance.

Autoregressive processes in the general form (\ref{arpn}) possess a number of interesting properties that can be utilized in reinforcement learning. However, for the purposes of the discussion in the following sections, from now on we will consider a simpler subfamily of processes, defined by (\ref{arpn2}). Notice, that $\alpha = 0$ results in $\tilde{\phi}_k = 0\ \forall k$, and $X_t$ becomes a white Gaussian noise. On the other hand, $\alpha \rightarrow 1$ results in $\sigma_Z^2 \rightarrow 0$, and $X_t$ becomes a constant function. Therefore, tuning a single scalar parameter $\alpha$ from $0$ to $1$ continuously adjusts temporal smoothness of $X_t$ ranging from white noise to a constant function. 
Figure \ref{fig:noise_fig} shows realizations of such processes at different values of $p$ and $\alpha$. 
\begin{figure}[t]
  \centering
  \includegraphics[width=0.48\textwidth]{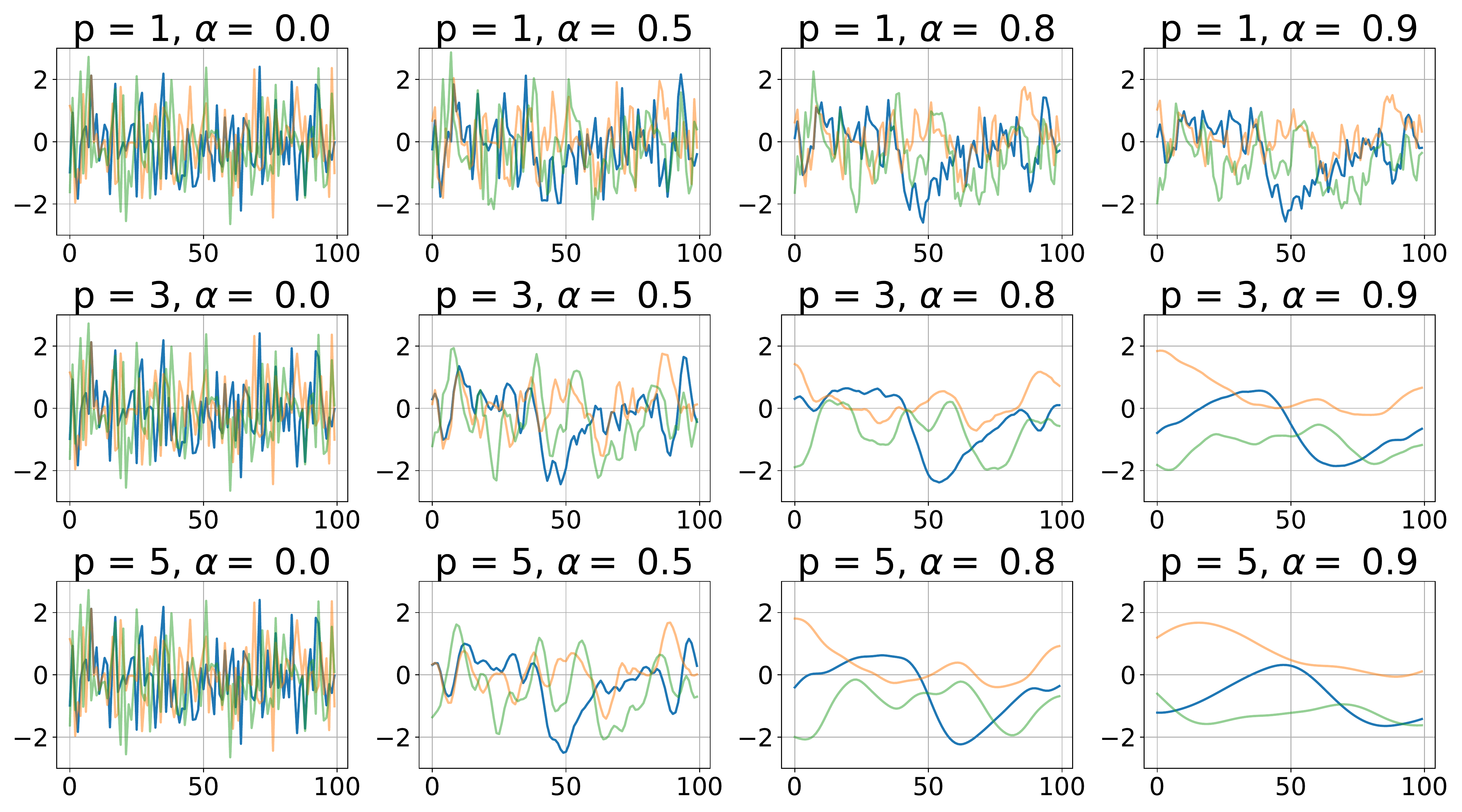}
  \caption{Realizations of processes (\ref{arpn2}) for different $p$ and $\alpha$ and the same set of 3 random seeds.}
  \label{fig:noise_fig}
\end{figure}
The realizations are initialized from the same set of 3 random seeds for each $p$, $\alpha$ pair. 

\section{Autoregressive policies}
In continuous control RL a policy is often defined as a parametrized diagonal Gaussian distribution: 
\begin{equation}
\label{gauss}
p_{\theta}(a_t | s_t) = \mathcal{N}(\mu_{\theta}(s_t), \sigma_{\theta}^2(s_t)\cdot I),
\end{equation}
where $s_t$ is a state at time $t$, $\mu_{\theta}(s_t)$ and $\sigma_{\theta}^2(s_t)$ are vectors parametrized by deep neural networks.
The actions, sampled from such distribution, can be represented as $a_t~=~\mu_{\theta}(s_t) + \sigma_{\theta}(s_t)\varepsilon_t,$ where $\varepsilon_t \sim \mathcal{N}(0, I)$ is a white Gaussian noise.
We propose to replace $\varepsilon_t$ with observations of an AR-$p$ process $\{X_t\}$ defined by (\ref{arpn2}) for some $p \in N$ and $\alpha \in [0, 1)$:
\begin{equation}
\label{action}
a_t = \mu_{\theta}(s_t) + \sigma_{\theta}(s_t)X_t.
\end{equation}
Both $\varepsilon_t$ and $X_t$ follow marginal standard normal distribution at each step $t$, therefore such substitution does not change the network output to noise ratio in sampled actions, however for $\alpha > 0$ the sequence $\{X_t\}$ possesses temporal coherence and can provide a more consistent exploration behavior. We~would like to build an agent that implements stochastic policy with samples, defined by (\ref{action}).
From definition (\ref{arpn2}) of the process $\{X_t\}$, (\ref{action}) can be expanded as
\begin{equation}
\label{ac_rep}
\begin{aligned}
a_t&=\mu_{\theta}(s_t) + \sigma_{\theta}(s_t)\sum_{k = 1}^p \tilde{\phi}_k X_{t-k} + \sigma_{\theta}(s_t)\tilde{\sigma}_Z \varepsilon_t,\\
\varepsilon_t &\sim \mathcal{N}(0, I).
\end{aligned}
\end{equation}
From (\ref{action}) also, $X_{t} = (a_{t} - \mu_{\theta}(s_t))/\sigma_{\theta}(s_t)\ \forall t$,
hence (\ref{ac_rep}) can be rewritten as
\begin{equation}
\label{ac_rep2}
\begin{aligned}
a_t &=\mu_{\theta}(s_t) + \sigma_{\theta}(s_t)\sum_{k = 1}^p \tilde{\phi}_k \frac{a_{t-k} - \mu_{\theta}(s_{t-k})}{\sigma_{\theta}(s_{t-k})} +\\
&+ \sigma_{\theta}(s_t)\tilde{\sigma}_Z \varepsilon_t,\\
\varepsilon_t &\sim \mathcal{N}(0, I).
\end{aligned}
\end{equation}
Denote $f_{\theta, t} = \sum_{k = 1}^p \tilde{\phi}_k \frac{a_{t-k} - \mu_{\theta}(s_{t-k})}{\sigma_{\theta}(s_{t-k})}$ the auto-regressive "history" term in (\ref{ac_rep2}). Note that $f_{\theta, t}$ is a function of past $p$ states and actions, $f_{\theta, t} = f(\{s_{t-k}, a_{t-k}\}_{k=1}^p, \theta)$. Then $a_t$ follows the distribution:
\begin{equation}
\label{ac_dist}
\begin{aligned}
a_t &\sim \mathcal{N}(\mu_{\theta}(s_t) + \sigma_{\theta}(s_t) f_{\theta, t}, \sigma_{\theta}^2(s_t)\tilde{\sigma}^2_Z \cdot I),\\
\tilde{\phi}_k&,\ \tilde{\sigma}^2_Z \text{ are defined by (\ref{arpn2}).}
\end{aligned}
\end{equation}

In order to implement such action distribution, we need to define a \textit{history-dependent} policy $\pi(a_t|s_t, h^p_t)$, where $h^p_t = (s_{t-p}, a_{t-p}, \ldots, s_{t-1}, a_{t-1})$ is a history of past $p$ states and actions. In general, history-dependent policies do not induce Markov stochastic processes, even if the environment transition probabilities are Markovian \cite[Section 2.1.6]{puterman2014markov}. However when the dependence is only on a history of a fixed size, such policy induces a Markov stochastic process in an extended state space, where states are defined as pairs $(h^p_t, s_t)$. In order to be able to lean on existing theoretical results, such as Policy Gradient Theorem \cite{sutton2000policy}, and to use existing learning algorithms, we will talk about learning policies in this extended MDP. 

More formally, let $M = (S, A, P(\cdot|a, s), r(s, a))$ be a given MDP with $S$ and $A$ denoting state and action sets, and $P$ and $r$ denoting transition probability and reward functions respectively. Let $p$ be an arbitrary integer number. We define a modified MDP $\tilde{M}^{p} = (\tilde{S}, \tilde{A}, \tilde{P}(\cdot |a, \tilde{s}), \tilde{r}(\tilde{s}, a))$ with the elements $\tilde{S} = \{S \times A\}^p \times S$, where $\{C\}^p$ denotes Cartesian product of set $C$ with itself $p$ times, $\tilde{A} = A$, and $\tilde{P}$ and $\tilde{r}$ defined as follows:

\begin{align*}
\forall \tilde{s}, \tilde{s}^{\prime} &\in \tilde{S}:\\
\tilde{s} &= (s_{1}, a_1, \ldots, s_p, a_p, s_{p+1}), a_k \in A, s_k \in S\ \forall k\\
\tilde{s}^{\prime} &= (s^{\prime}_{1}, a^{\prime}_1, \ldots,  s^{\prime}_p, a^{\prime}_p, s^{\prime}_{p+1}), a^{\prime}_k \in A, s^{\prime}_k \in S\ \forall k\\
\tilde{P}(\tilde{s}^{\prime} |a, \tilde{s}) &= \begin{cases} P(s^{\prime}_{p+1}|a, s_{p+1}), & \text{if }
\!\begin{aligned}[t]
       s^{\prime}_k &= s_{k+1}, k \le p\\
       a^{\prime}_k &= a_{k+1},\ k < p\\
       a^{\prime}_p &= a
       \end{aligned}
       \\
& \\
0 &  \text{otherwise} \end{cases}\\
\tilde{r}(\tilde{s}, a) &= r(s_{p+1}, a)
\end{align*}

In other words, transitions in a modified MDP $\tilde{M}^{p}$ correspond to transitions in the original MDP $M$ with states in $\tilde{M}^p$ containing also the history of past $p$ states and actions in $M$. 
The interaction between the agent and the environment, induced by $\tilde{M}^p$, occurs in the following way. At each time $t$ the agent is presented with the current state $\tilde{s}_t = (s_{t-p}, a_{t-p}, \ldots, s_{t-1}, a_{t-1}, s_t)$. Based on this state and its policy, it chooses an action $a_t$ from the set $A$ and sends it to the environment. Internally, the environment propagates the action $a_t$ to the original MDP $M$, currently in state $s_t$, which responds with a reward value $r_{t+1}$ and transitions to a new state $s_{t+1}$. At this moment, the MDP $\tilde{M}^p$ transitions to a new state $\tilde{s}_{t+1} = (s_{t-p +1}, a_{t-p+1}, \ldots, s_{t}, a_{t}, s_{t+1})$ and presents it to the agent together with the reward $r_{t+1}$. Let $s_0$ be an element of the set of the initial states of $M$. A corresponding initial state of $\tilde{M}^p$ is defined as $\tilde{s}_0 = (\underbrace{s_0, a_0, \ldots, s_0, a_0}_\text{p \text{ repetitions}}, s_0)$, where $a_0$ is any element of an action set $A$, for example zero vector in the case of continuous space. The particular choice of $a_0$ is immaterial, since it does not affect future transitions and rewards (more details in Appendix \ref{app:details}).

\noindent We define an \textbf{autoregressive policy} (ARP) over $\tilde{M}^p$ as:
\begin{equation}
\label{policy}
\begin{aligned}
\forall \tilde{s}_t &= (s_{t-p}, a_{t-p}, \ldots, s_{t-1}, a_{t-1}, s_t): \\
\pi_{\theta}(a_t|\tilde{s}_t) &= \mathcal{N}(\mu_{\theta}(s_t) + \sigma_{\theta}(s_t) f_{\theta}(\tilde{s}_t),\ \sigma^2_{\theta}(s_t)\tilde{\sigma}^2_Z I),\\
f_{\theta}(\tilde{s}_t) &= \sum_{k = 1}^p \tilde{\phi}_k \frac{a_{t-k} - \mu_{\theta}(s_{t-k})}{\sigma_{\theta}(s_{t-k})},\\
\tilde{\phi}_k,\ \tilde{\sigma}^2_Z &\text{ are defined by (\ref{arpn2})},
\end{aligned}
\end{equation}
where $\mu_{\theta}(\cdot)$ and $\sigma_\theta(\cdot)$ are parametrized function approximations, such as deep neural networks. For notation brevity, we omitted dependence of policy $\pi_{\theta}$ on $\{\tilde{\phi}_k\}$ and $\tilde{\sigma}_Z$, since these values are constant once the autoregressive model $\{X_t\}$ is selected. In this parametrization, $\mu_{\theta}(s_{t - k}),\ k = 0, \ldots, p$ should be thought of as the same parametrized function $\mu_{\theta}(\cdot)$ applied to different parts of the state vector $\tilde{s}_t$, therefore each occurence of $\mu_{\theta}(\cdot)$ in (\ref{policy}) contributes to the gradient w.r.t. parameters $\theta$. Similarly, each occurence of $\sigma_{\theta}(\cdot)$ contributes to the gradient w.r.t. $\theta$. Note, that including history of states and actions does not affect the dimensionality of the input to the function approximations, as both $\mu_{\theta}(\cdot)$ and $\sigma_{\theta}(\cdot)$ accept only states from the original space as inputs.

The history-dependent policy (\ref{policy}) results in the desired action distribution (\ref{ac_dist}) in the original MDP $M$, at the same time with respect to $\tilde{M}^p$ it is just a particular case of a Gaussian policy (\ref{gauss}). Formally, we will perform learning in $\tilde{M}^p$, where $\pi_{\theta}$ is Markov, and therefore all the related theoretical results apply, and any off-the-shelf learning algorithm, applicable to policies of type (\ref{gauss}), can be used. In particular, the value function in e.g. actor-critic architectures is learned with usual methods. Empirically we found that conditioning value function only on a current state $s_t$ from the original MDP instead of an entire vector $\tilde{s}_t$ gives more stable learning performance. It also helps to maintain the critic network size invariant to the AR process order $p$.

By design, for each sample path $(\tilde{s}_0, a_0, \tilde{s}_1, a_1, \ldots)$ in $\tilde{M}^p$ there is a corresponding sample path $(s_0, a_0, s_1, a_1, \ldots)$ in $M$ with identical rewards. Therefore, improving the policy and the obtained rewards in $\tilde{M}^p$ results in identical improvement of a corresponding history-dependent policy in $M$. Notice also, that if $\sigma_{\theta}(s_t) \rightarrow 0$ in (\ref{policy}), then $\pi_{\theta}$ reduces to a Markov deterministic policy $a_t = \mu_{\theta}(s_t)$ in $M$. Therefore, the optimal policy in the set of ARPs defined by (\ref{policy}) is  at least as good, as the best deterministic policy in the set of policies  $a_t = \mu_{\theta}(s_t)$. This is in contrast with action averaging approaches, where temporal smoothing is typically imposed on the entire action vector and not just on the exploration component, limiting the space of possible deterministic policies. 

It is important to point out that for any history-dependent policy there exists an equivalent Markov stochastic policy with identical expected returns \cite[Theorem 5.5.1]{puterman2014markov}. For the policy (\ref{policy}), for example, it can be constructed as $\pi_{\theta}^{M}(a | s) = \sum\limits_{h^p \in H^p} \pi_{\theta}(a | s, h^p)p(h^p|s, \pi_{\theta})\ \forall (a, s)$, where $H^p$ is a set of all histories of size $p$. However, $\pi_{\theta}^{M}(a | s)$ is a non-trivial function of a state $s$, unknown to us at the beginning of learning. It is certainly not given by a random initialization of (\ref{gauss}), while a random initialization of (\ref{policy}) already provides consistent and smooth behavior. $\pi^{M}(a | s)$ also cannot be derived analytically from (\ref{policy}), since computing $p(h^p|s, \pi_{\theta})$ requires knowledge of environment transition probabilities, which we cannot expect to have for each given task. From these considerations, the particular form of policy parametrization defined by (\ref{policy}) can also be thought of as an additional structure, enforced upon the general class of Markov policies, such as policies defined by (\ref{gauss}), restricting possible behaviors to temporally coherent ones.

Although autoregressive term $f_{\theta}(\tilde{s}_t)$ in (\ref{policy}) is formally a part of the distribution mean, numerically it corresponds to a stationary zero mean random process $F_t = \sum_{k=1}^p \tilde{\phi}_k X_{t-k}$, where $\{X_t\}$ is an underlying AR process defined by (\ref{arpn2}). Therefore, $f_{\theta}(\tilde{s}_t)$ can be thought of as a part of an action exploration component around the "true" mean, given by $\mu_{\theta}(s_t)$. It is this part that ensures a consistent and smooth exploration, as will be demonstrated in the next section.

In principle, one could define $\pi_\theta$ in (\ref{policy}) using arbitrary values of coefficients $\{\tilde{\phi}_k\}$ and $\tilde{\sigma}_Z^2$. The role of particular values of $\{\tilde{\phi}_k\}$ computed according to (\ref{arpn2}) is to make sure, that the underlying AR process $\{X_t\}$ is stationary and the autoregressive part $f_{\theta}(\tilde{s}_t)$ does not explode. The role of $\tilde{\sigma}_Z^2$ computed by solving (\ref{yw}) with coefficients $\{\phi_k = \tilde{\phi}_k\}$ and $\gamma_0=1$ is to make sure, that the variance of the underlying process~$\{X_t\}$~is~1. The total variance around $\mu_{\theta}(s_t)$ is then conveniently defined by an agent controlled $\sigma_{\theta}(s_t)$.

Since linear system (\ref{yw}) with coefficients (\ref{coeffs}) and $\gamma_0 = 1$ has a unique solution according to the Proposition \ref{theorem}, its matrix has a full rank, and therefore the system is well-determined and can be solved numerically to an arbitrary precision. In practice we solve it with numpy.linalg.solve function.

\section{Experiments}
\label{sec:exps}
We compared conventional Gaussian policy with ARPs on a set of tasks with both, sparse and dense reward functions, in simulation and the real world. In the following learning experiments we used the Open AI Baselines PPO algorithm implementation \cite{schulman2017proximal,dhariwal2017openai}. The results with Baselines TRPO \cite{schulman2015trust} are provided in Appendix \ref{app:trpo}. For each experiment we used identical algorithm hyper-parameters and neural network structures to parametrize $\mu_{\theta}$, $\sigma_{\theta}$ and the value networks for both Gaussian and ARP policies. We used the same set of random seeds to initialize neural networks and the same set of random seeds to initialize environments that involve uncertainty. Detailed parameters for each task are included in Appendix \ref{app:params}.  We did not perform a hyper-parameter search to optimize for ARP performance, as our primary objective is to demonstrate the advantage of temporally coherent exploration even in the setting, tuned for a standard Gaussian policy.  The video of agent behaviors can be found at \url{https://youtu.be/NCpyXBNqNmw}. The code to reproduce experiments is available at \url{https://github.com/kindredresearch/arp}.

\subsection*{The order of an autoregressive process}
\begin{figure}[bt]
  \centering
  \includegraphics[scale=.27]{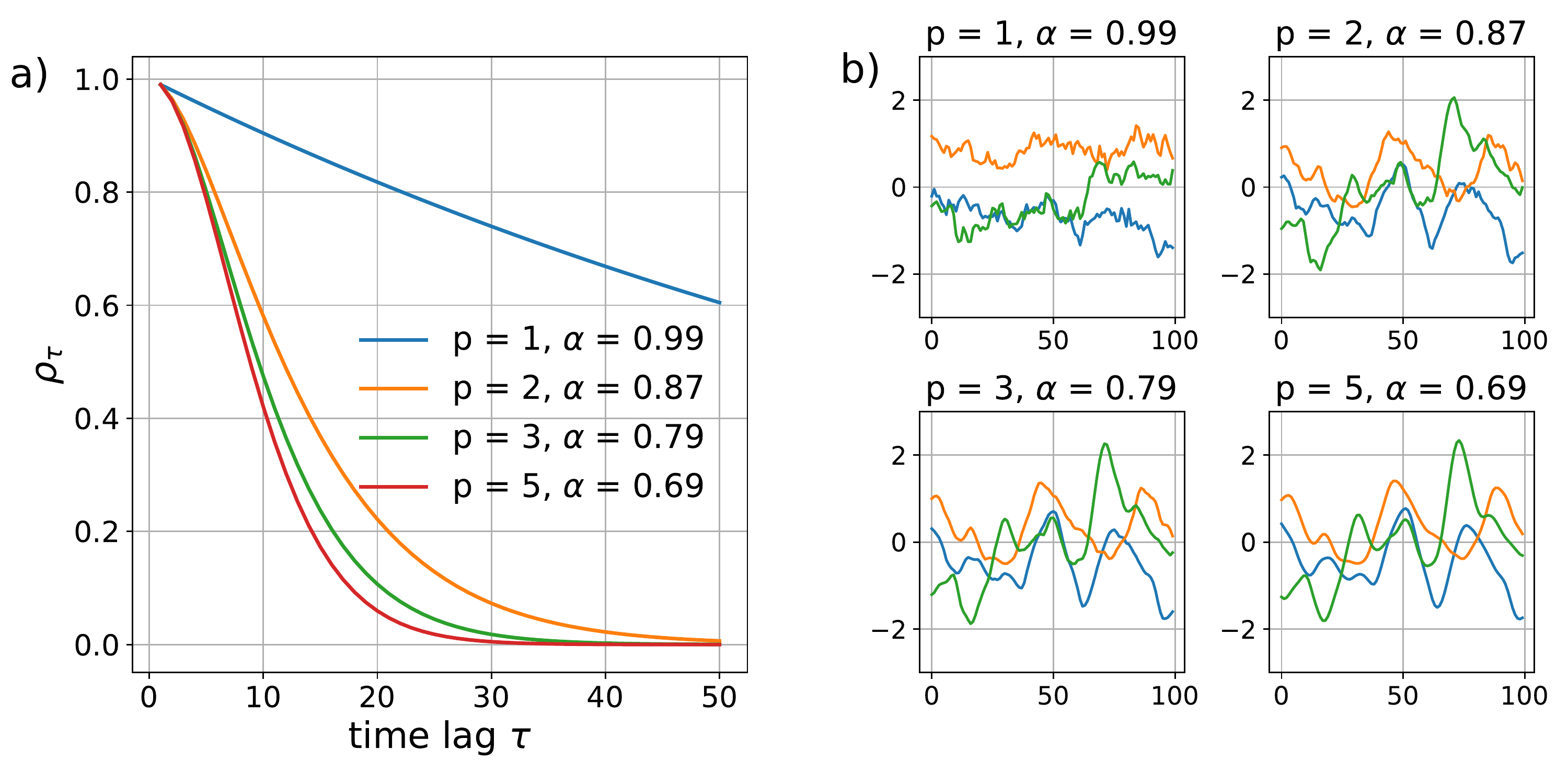}
  \caption{a) Autocorrelation function $\rho_{\tau}$ for autoregressive processes (\ref{arpn2}) with different orders $p$ but the same value of $\rho_1 = 0.99$. b) Realizations of processes (\ref{arpn2}) with the same $\rho_1 =0.99$.}
  \label{fig:acf}
\end{figure} 
From Figure \ref{fig:noise_fig} one can notice that the temporal smoothness of realizations of AR processes (\ref{arpn2}) empirically increases with both, parameter $\alpha$ and order $p$. 
Why do we need higher order processes if we can simply increase $\alpha$ to achieve a higher degree of temporal coherence? To answer this question it is helpful to consider an autocorrelation function (ARF) $\rho_{\tau} = \text{cov}(X_t, X_{t+\tau})/\text{var}(X_t) = \gamma_{\tau}/\gamma_0$ of these processes.
White Gaussian noise by definition has autocorrelation function equal to zero at any $\tau$ other than 0. An autoregressive process with non-zero coefficients generally has non-zero values of autocorrelation function at all $\tau$.

One of the reasons we are interested in autoregressive processes for exploration is that they provide smooth trajectories that do not result in jerky movement and do not damage physical robot hardware. 
Intuitively, the smoothness of the process realization is defined by a correlation between subsequent observations $\text{corr}(X_t, X_{t+1}) = \rho_{1}$, which for a given $p$ increases with increasing $\alpha$. However, given the same value $\rho_1$, processes of different orders $p$ behave differently. Figure \ref{fig:acf}a shows ARFs for different processes defined by (\ref{arpn2})  and their corresponding values of $\alpha$  with the same value of $\rho_1 = 0.99$, while Figure \ref{fig:acf}b shows realizations of these processes.
ARF values at higher orders $p$ decrease much faster with increasing time lag $\tau$ compared to the 1st order process, where correlation between past and future observations lingers over long periods of time.
As shown on Figure \ref{fig:acf}b, the 1st order AR process produces nearly a constant function, while the 5th order process exhibits a much more diverse exploratory behavior. Given the same value of correlation between subsequent realizations, higher order autoregressive processes exhibit lower correlation between observations distant in time, resulting in trajectories with better exploration potential. In robotics applications where smoothness of the trajectory can be critical, higher order autoregressive processes may be a preferable choice. 
Empirically we found that the 3-rd order  processes provide sufficiently smooth trajectories while exhibiting a good exploratory behavior, and used $p=3$ in all our subsequent learning experiments varying only the smoothing parameter $\alpha$. 
\subsection*{Toy environment with sparse reward}
\begin{figure}[tb]
  \centering
  \includegraphics[width=.5\textwidth]{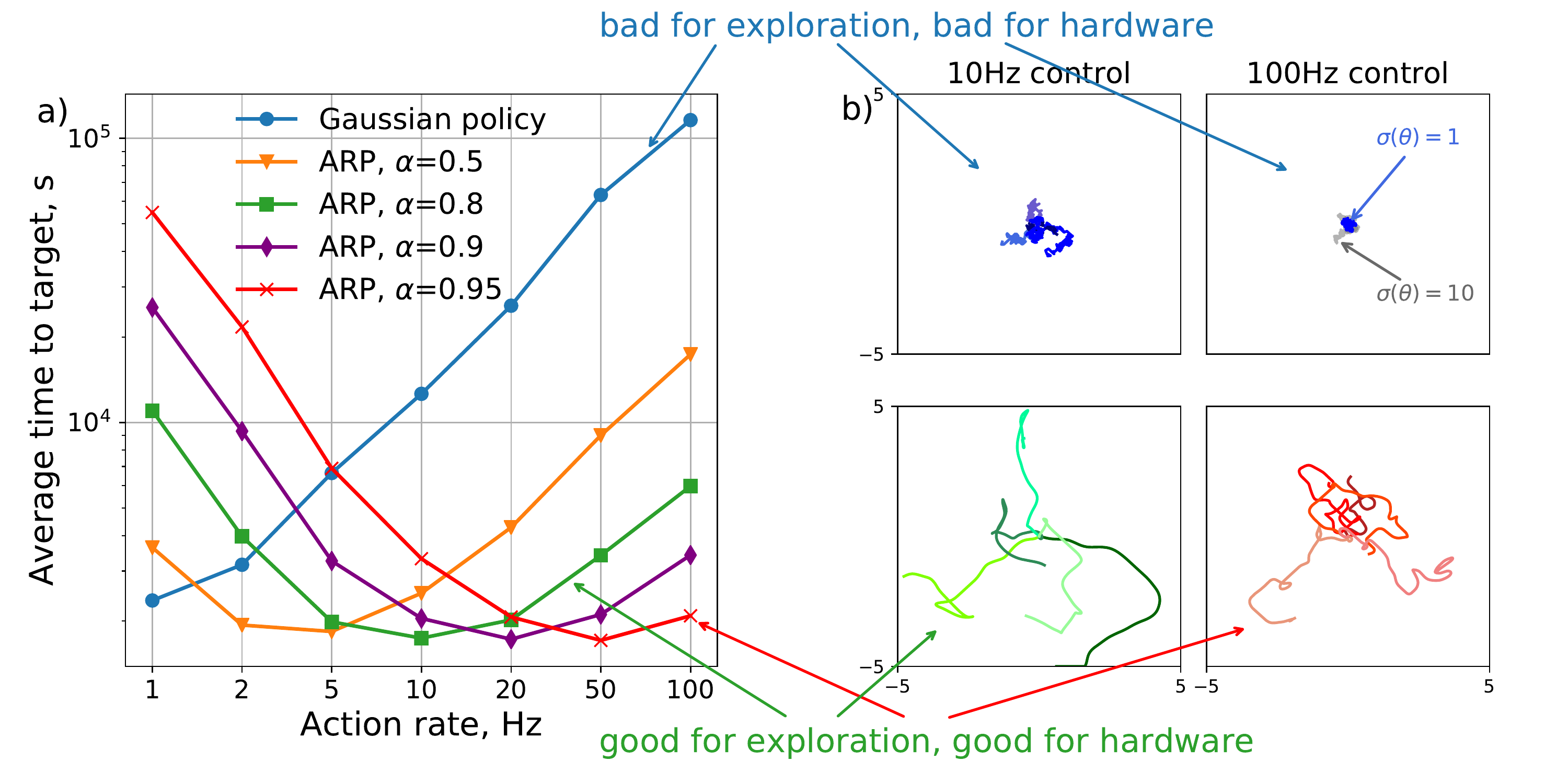}
  \caption{a) Average time to target in Square environment as a function of an action rate for Gaussian policy and ARPs with varied $\alpha$. \\b) 10 seconds long exploration trajectories at 10Hz (left column) and 100Hz (right column) action rate using Gaussian policy (top row) and ARPs with $p=3$ and $\alpha$ values 0.8 and 0.95 (bottom row).}
  \label{fig:steps}
\end{figure}  
To demonstrate the advantage of temporally consistent exploration, in particular at high action rates, we designed a toy \textit{Square} environment with a 2D continuous state space bounded by a 10x10 square arena. The agent controls a dot through a continuous direct velocity control. The agent is initialized in the middle of the arena at the start of each episode and receives a -1 reward at each time step scaled by time step duration. The target is generated at a random location on a circle of diameter 5 centered at the middle of the arena to make episodes homogenous in difficulty. The episode is over when the agent approaches the target to within a distance of 0.5. The action space is bounded within a two-dimensional $[-1, 1]^2$ interval. The observation vector contains the agent's position, velocity, and the vector difference between agent position and the target position.

To compare exploration efficiency we ran random ARP ($p=3$) and Gaussian agents with $\mu_{\theta}(\cdot),\ \sigma_{\theta}$ initialized to $\vec{0}$ and $\vec{1}$ respectively for 10 million simulated seconds at different action rates. Figure \ref{fig:steps}a shows average time to reach the target as a function of an action rate.
The results show that the optimal degree of temporal coherence depends on the environment properties, such as action rate. At low control frequency a white Gaussian exploration is more effective than ARPs with high $\alpha$, as in the latter the agent quickly reaches the boundary of the state space and gets stuck there. The efficiency of Gaussian exploration drops dramatically with the increase of action rate. However it is possible to recover the same exploration performance in ARP by increasing accordingly the $\alpha$ parameter. This effect is visualized on Figure \ref{fig:steps}b which shows five 10 seconds long exploration trajectories at 10Hz and 100Hz control for Gaussian and ARP policies. Although ran for the same amount of simulated time, Gaussian exploration at 100Hz covers substantially smaller area of state space compared to 10Hz control, while increasing $\alpha$ from 0.8 to 0.95 (the values were chosen empirically) results in ARP trajectories covering similar space at both action rates. 
Note that the issue with Gaussian policy can not be fixed by simply increasing the variance, as most actions will just be clipped at the $[-1, 1]^2$ boundary, resulting in a similarly poor exploration. Figure \ref{fig:steps}b top right plot shows exploration trajectories for $\sigma(\theta) = \vec{1}$ (blue) and $\sigma(\theta) = \vec{10}$ (gray). To the contrary of the common intuition, in bounded action spaces Gaussian exploration with high variance does not produce a diverse state-action visitation.  
\begin{figure}[tb]
\centering
  \centering
  \includegraphics[width=0.5\textwidth]{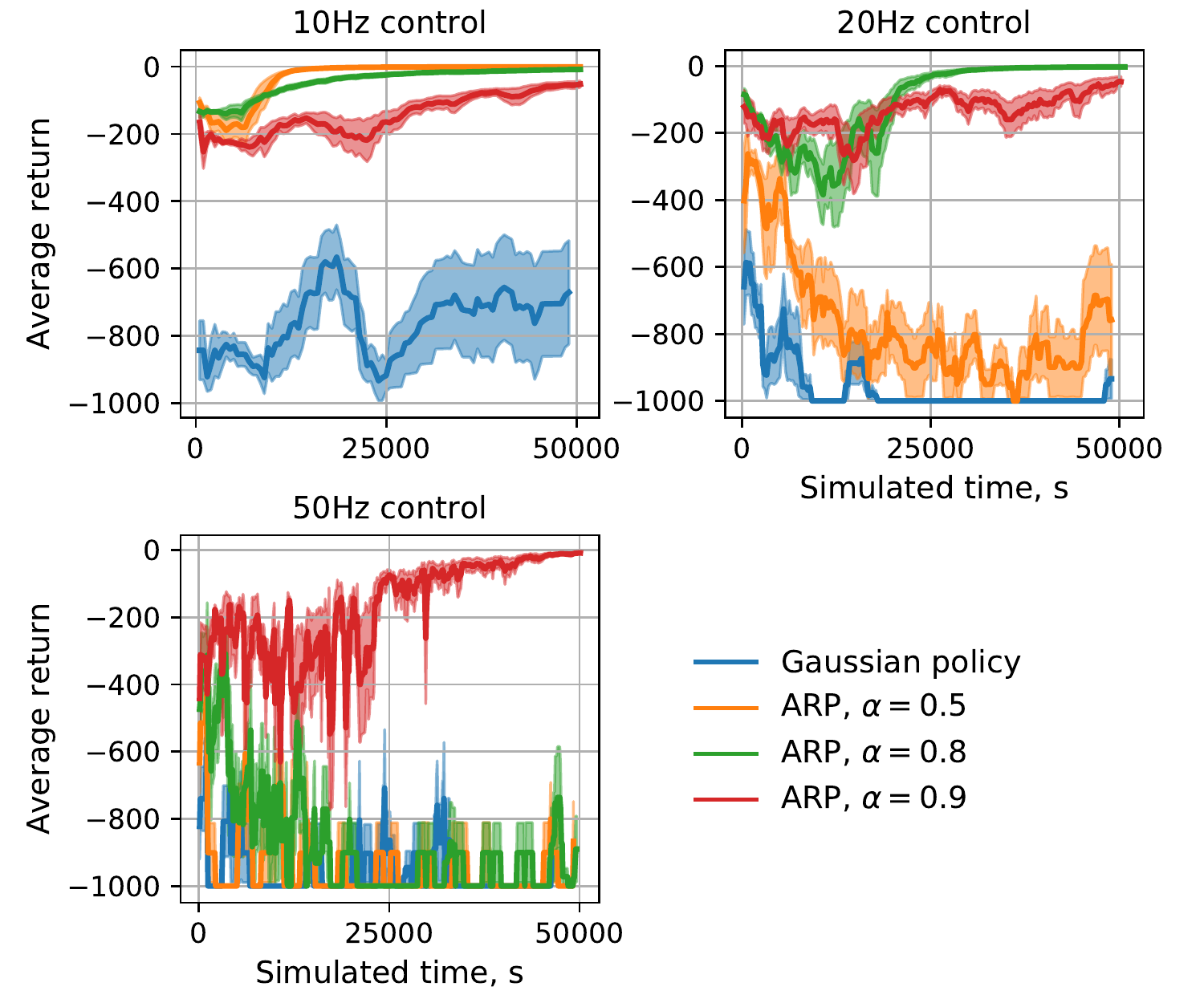}
  \caption{On a toy 2D environment with sparse reward, white noise exploration (Gaussian policy) leads to ineffective learning. Temporally smoother processes are effective at higher action rates.}
  \label{fig:sq_learn}
 \end{figure}
\begin{figure}[b]
  \centering
  \includegraphics[width=0.5\textwidth]{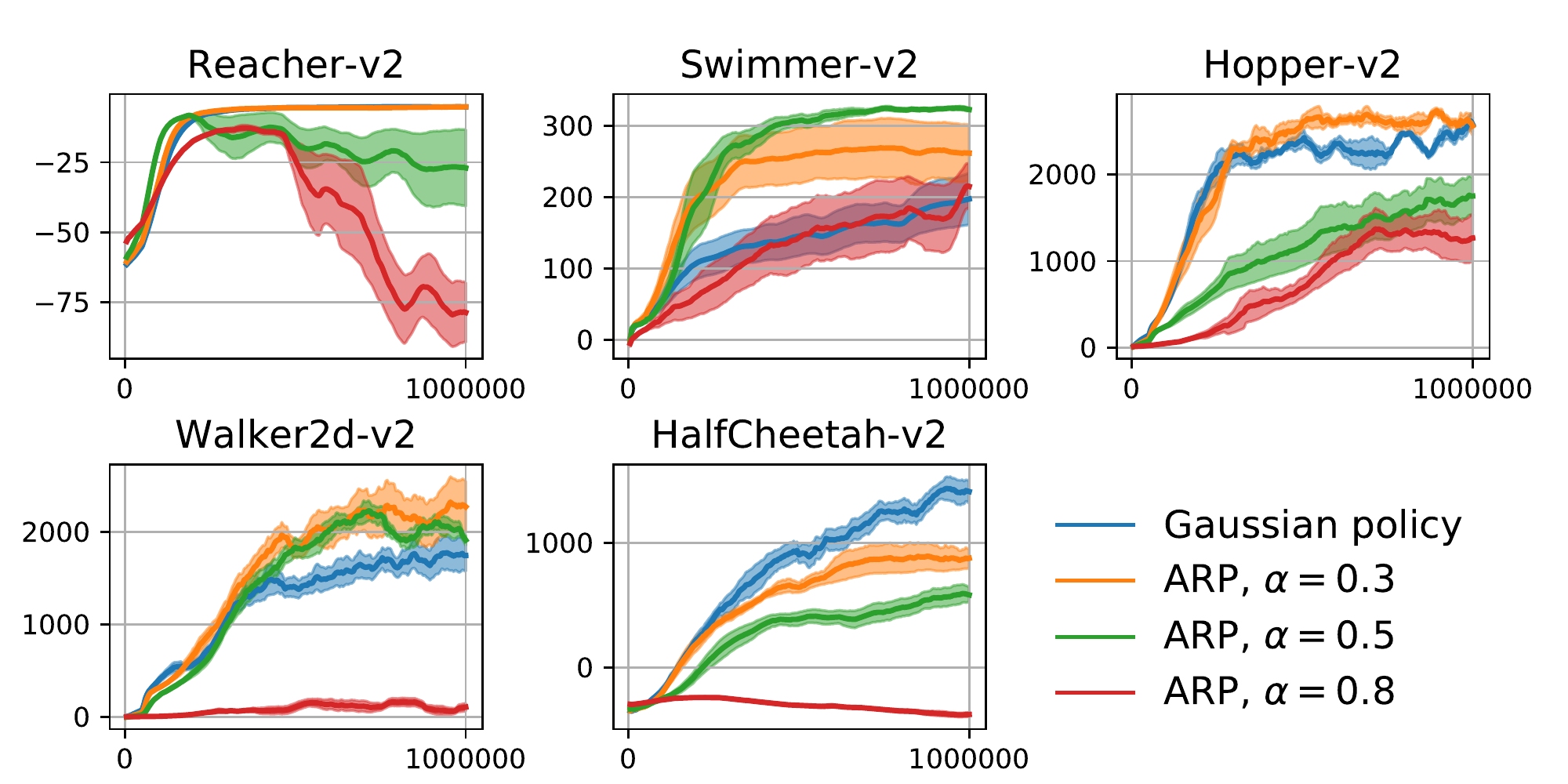}
  \caption{Learning curves in Mujoco-based environments.}
  \label{fig:mj_learn}
\end{figure}

The advantage of ARPs in exploration translates into an advantage in learning. 
Figure \ref{fig:sq_learn} shows learning curves (averaged over 5 random seeds) on Square environment at different action rates ran for 50,000 seconds of total simulated time with episodes limited to 1000 simulated seconds.
Not only ARPs exhibit better learning, but the initial random behaviour gives much higher returns compared to initial Gaussian agent behaviour. At higher action rates ARPs with higher $\alpha$ produce better results.

In the formulation of the AR-1 process used in Lillicrap \textit{et al.} \shortcite{lillicrap2015continuous} and in Tallec \textit{et al.} \shortcite{tallec2019making}, parameter $\alpha$ corresponds to $1 - \kappa dt$, where $dt$ is a time step duration. Hence, in that formulation $\alpha$ naturally approaches 1 as $dt$ approaches zero. 
Note, that in order to achieve the best performance on each given task, parameter $\kappa$ still needs to be tuned, just as parameter $\alpha$ needs to be tuned in our formulation. The optimal values of these parameters depend not only on action rate, but also on environment properties, such as a size of a state space relative to the typical size of an agent step.

\subsection*{Mujoco experiments}
Figure \ref{fig:mj_learn} shows the learning results on standard OpenAI Gym Mujoco environments \cite{1606.01540}. These environments have dense rewards, so consistent exploration is less crucial here compared to tasks with sparse rewards. Nevertheless, we found that ARPs perform similarly or slightly better, than a standard Gaussian policy. On a Swimmer-v2 environment ARP resulted in a much better performance compared to Gaussian policy, possibly because in this environment smooth trajectories are highly rewarded.

\subsection*{Physical robot experiments}
On a UR5 robotic arm we were able to obtain results similar to those in the toy environment. We designed a sparse reward version of a UR5 Reacher 2D task introduced in \cite{Mahmood2018BenchmarkingRL}. In a modified task at each time step the agent receives a -1 reward scaled by a time step duration. The episode is over when the agent reaches the target within a distance of 0.05. In order to provide sufficient time for exploration in a sparse reward setting we doubled the episode time duration to 8 seconds. Figure \ref{fig:ur5} shows the learning curves for 25Hz and 125Hz control. Each curve is an average across 4 random seeds. The Gaussian policy fails to learn in a 125Hz control setting within a 5 hours time limit, while ARP was able to find an effective policy in 50\% of the runs, and the effectiveness at higher $\alpha$ increased at a higher action rate. 
\begin{figure}[tb]
  \centering
  \includegraphics[width=0.5\textwidth]{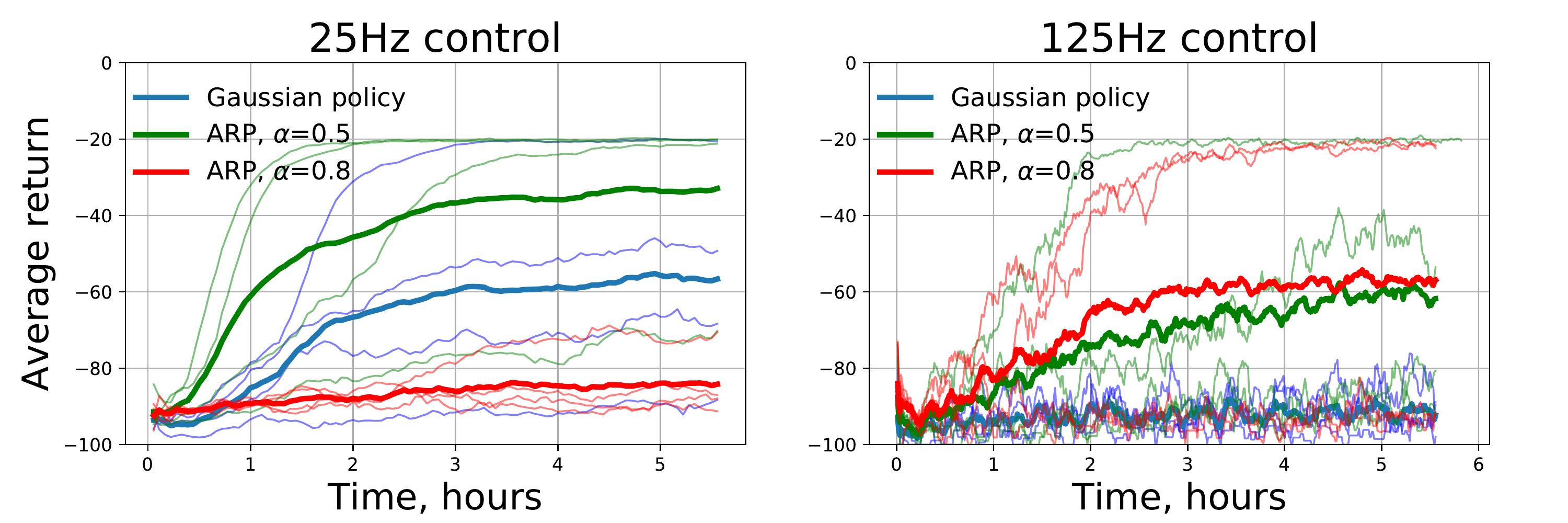}
  \caption{Learning curves on a UR5 Reacher 2D environment with sparse reward at 25Hz and 125Hz velocity control. Thick lines represent average across runs, thin lines show individual curves.}
  \label{fig:ur5}
\end{figure}

\section{Conclusions}
We introduced autoregressive Gaussian policies (ARPs) for temporally coherent exploration in continuous control deep reinforcement learning. The policy form is grounded in the theory of stationary autoregressive stochastic processes. We derived a family of stationary Gaussian autoregressive stochastic processes for an arbitrary order $p$ with continuously adjustable degree of temporal coherence between subsequent observations. We derived an agent policy that implements these processes with a standard agent-environment interface. Empirically we showed that ARPs result in a superior exploration and learning in sparse reward tasks and perform on par or better compared to standard Gaussian policies in dense reward tasks. On physical hardware, ARPs result in smooth trajectories that are safer to execute compared to the trajectories provided by conventional Gaussian exploration.

\bibliographystyle{named}
\DeclareRobustCommand{\VAN}[3]{#3}
\bibliography{correlated_noise_refs}

\onecolumn
\clearpage
\appendix
\section{Proof of Proposition \ref{theorem}}
\label{app:proof}
\begin{lemma}
\label{lemma1}
For any $p \in \mathbb{N}$ and for any $\alpha_k \in [0, 1), k = 1, \dots, p$ the autoregressive process
\begin{equation}
\label{arp2}
\begin{aligned}
X_t &= \sum_{k = 1}^p \tilde{\phi}_k X_{t-k} + Z_t\\
\tilde{\phi}_k &= (-1)^{k + 1}\sum\limits_{1 
\le i_1, i_2, \dots, i_k \le p} \alpha_{i_1} \alpha_{i_2} \dots \alpha_{i_k},\ k = 1, \ldots, p\\
Z_t &\sim \text{WN}(0, \sigma_Z^2),\ \sigma_Z^2 < \infty,
\end{aligned}
\end{equation} 
is stationary.
\end{lemma}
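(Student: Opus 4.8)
The plan is to exploit the fact that the coefficients $\tilde{\phi}_k$ have been reverse-engineered precisely so that the characteristic polynomial of (\ref{arp2}) factors completely with prescribed roots $\alpha_1,\dots,\alpha_p$; once this is established, stationarity is immediate from the root-location criterion stated in the Background. Concretely, I would compute the characteristic polynomial $P(z) = z^p - \sum_{i=1}^p \tilde{\phi}_i z^{p-i}$ and show that it equals $\prod_{k=1}^p (z-\alpha_k)$, so that its roots are exactly the $\alpha_k$, each of which satisfies $|\alpha_k| < 1$ since $\alpha_k \in [0,1)$.

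The central algebraic step is matching coefficients between the expanded product and the characteristic polynomial. Expanding $\prod_{k=1}^p (z-\alpha_k) = \sum_{k=0}^p (-1)^k e_k\, z^{p-k}$, where $e_k = \sum_{1 \le i_1 < \dots < i_k \le p} \alpha_{i_1}\cdots \alpha_{i_k}$ is the $k$-th elementary symmetric polynomial (Vieta's formulas), I would compare this against $z^p - \sum_{i=1}^p \tilde{\phi}_i z^{p-i}$. Matching the coefficient of $z^{p-k}$ gives $-\tilde{\phi}_k = (-1)^k e_k$, i.e. $\tilde{\phi}_k = (-1)^{k+1} e_k$, which is exactly the defining formula in (\ref{arp2}). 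This identity can be verified either by direct expansion of the product or by an easy induction on $p$. The one thing to watch is the sign bookkeeping: the characteristic polynomial convention in the Background carries a leading minus sign, $P(z) = z^p - \sum \phi_i z^{p-i}$, so the alternating signs from Vieta must be tracked against this minus sign in order to land on the correct $(-1)^{k+1}$.

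With the factorization $P(z) = \prod_{k=1}^p (z-\alpha_k)$ in hand, the roots $G_i = \alpha_i$ are real and lie in $[0,1)$, hence $|G_i| < 1$ for all $i$. The stationarity criterion quoted in the Background (a process of the form (\ref{arp}) is stationary when all roots of its characteristic polynomial lie strictly inside the unit circle) then applies verbatim and yields stationarity of (\ref{arp2}). I expect the main obstacle to be nothing deeper than the coefficient-matching bookkeeping; the conceptual content reduces entirely to recognizing the $\tilde{\phi}_k$ as signed elementary symmetric polynomials and invoking the known root-location test.
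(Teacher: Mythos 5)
Your proposal is correct and follows essentially the same route as the paper's own proof: recognize the $\tilde{\phi}_k$ as signed elementary symmetric polynomials so that the characteristic polynomial factors as $\prod_{k=1}^p(z-\alpha_k)$, then invoke the root-location criterion with $|\alpha_k|<1$. The only difference is that you spell out the Vieta coefficient-matching that the paper asserts without detail.
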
 
\begin{proof}
Consider a polynomial
$$P(z) = (z - \alpha_1)(z - \alpha_2)\dots(z - \alpha_p).$$
It can be written in a form
\begin{equation}
\label{pz2}
\begin{aligned}
P(z) &= z^p - \sum_{k = 1}^p\tilde{\phi}_k z^{p-k},\\
\tilde{\phi}_k &= (-1)^{k + 1}\sum\limits_{1 
\le i_1, i_2, \dots, i_k \le p} \alpha_{i_1} \alpha_{i_2} \dots \alpha_{i_k},\ k = 1, \ldots, p 
\end{aligned}
\end{equation} 
consequently $P(z)$ is a characteristic polynomial of the process (\ref{arp2}). By design, $P(z)$ has roots $\alpha_k, k = 1, \dots, p$ which all lie within a unit circle, therefore the process (\ref{arp2}) is stationary.
\end{proof}

\begin{lemma}
\label{lemma2}
Let $\{X_t\}$ be an autoregressive process defined in (\ref{arp2}).
If its white noise component $Z_t$ is Gaussian, i.e. $Z_t \sim \mathcal{N}(0, \sigma_Z^2)$, then $X_t$ are identically distributed normal random variables with zero mean and finite variance $var(X_t)~=~\sigma_X ^2~<~\infty \ \forall t$.
\end{lemma}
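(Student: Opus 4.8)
The plan is to exploit the fact, established in Lemma \ref{lemma1}, that the process is stationary with all roots of its characteristic polynomial strictly inside the unit circle. In the theory of linear time series this is exactly the causal case, and it guarantees that $\{X_t\}$ admits a one-sided moving-average representation
$$X_t = \sum_{j=0}^{\infty} \psi_j Z_{t-j}, \qquad \sum_{j=0}^{\infty} |\psi_j| < \infty,$$
where the coefficients $\psi_j$ are read off from the power-series expansion of $1/\phi(z)$ with $\phi(z) = 1 - \sum_{k=1}^p \tilde{\phi}_k z^k = \prod_{k=1}^p (1 - \alpha_k z)$. Because the zeros of $\phi$ sit at $1/\alpha_k$ with $|1/\alpha_k| > 1$, hence outside the closed unit disk, this expansion converges in a neighbourhood of the disk and the $\psi_j$ decay geometrically, so absolute summability holds (see e.g. \cite[Section 3.1]{brockwell2002introduction}). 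First I would invoke this representation as the explicit stationary solution of the recursion (\ref{arp2}).

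Second, I would read off the three desired properties directly from this representation. Each truncated sum $S_n = \sum_{j=0}^{n} \psi_j Z_{t-j}$ is a finite linear combination of independent Gaussian variables and is therefore Gaussian; since $S_n \to X_t$ in mean square (the tail variance $\sigma_Z^2 \sum_{j>n} \psi_j^2 \to 0$) and a mean-square, hence in-distribution, limit of Gaussian random variables is again Gaussian, $X_t$ is normally distributed. Its mean is $\mathbb{E}[X_t] = \sum_{j=0}^{\infty} \psi_j\, \mathbb{E}[Z_{t-j}] = 0$, and its variance is $\text{var}(X_t) = \sigma_Z^2 \sum_{j=0}^{\infty} \psi_j^2$, which is finite because $\sum_j |\psi_j| < \infty$ forces $\sum_j \psi_j^2 < \infty$. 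Finally, since the coefficients $\psi_j$ do not depend on $t$ and the $Z_t$ are identically distributed, the law of $X_t$ is the same $\mathcal{N}(0, \sigma_X^2)$ for every $t$, giving the claimed identical distribution.

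The main obstacle is one of rigour rather than of new ideas: one must justify the passage to the MA($\infty$) representation (existence, uniqueness and absolute summability of the $\psi_j$) together with the closure of the Gaussian family under mean-square limits. Both are standard, and I would discharge them by citation to the linear-process theory. As a lighter, self-contained alternative for the zero-mean claim I could instead take expectations in (\ref{arp2}) and use stationarity ($\mathbb{E}[X_t] = \mu$ constant) to obtain $\mu\,(1 - \sum_{k} \tilde{\phi}_k) = 0$; since $1 - \sum_{k} \tilde{\phi}_k = P(1) = \prod_{k=1}^p (1 - \alpha_k) > 0$ for $\alpha_k \in [0,1)$, this forces $\mu = 0$ with no appeal to the infinite expansion. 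Likewise, finite variance is already implicit in the definition of weak stationarity, so the only step genuinely requiring the linear-process machinery is the \emph{normality} of $X_t$ itself.
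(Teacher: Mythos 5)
Your proposal is correct, and for the key claim --- normality of $X_t$ --- it is actually more careful than the paper's own argument. The paper's proof simply invokes Lemma \ref{lemma1} to get weak stationarity and then asserts ``if $Z_t$ is Gaussian, then $X_t$ are identically distributed normal variables''; strictly speaking, weak stationarity only controls the first two moments, and the justification for Gaussianity of the marginal law is exactly the causal MA($\infty$) representation $X_t = \sum_{j\ge 0}\psi_j Z_{t-j}$ that you make explicit, together with closure of the Gaussian family under mean-square limits. So your route supplies the step the paper leaves implicit, at the cost of invoking (by citation) the existence and absolute summability of the linear-process coefficients. For the zero-mean claim, your ``lighter alternative'' is precisely the paper's argument: take expectations in (\ref{arp2}), use constancy of the mean, and note that $1-\sum_k\tilde{\phi}_k = P(1) = \prod_k(1-\alpha_k) \neq 0$ since $1$ is not a root of the characteristic polynomial. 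One cosmetic remark: when some $\alpha_k=0$ the corresponding factor $(1-\alpha_k z)$ contributes no zero at all rather than a zero at $1/\alpha_k$, so the phrasing ``zeros sit at $1/\alpha_k$'' should be read as applying only to the nonzero $\alpha_k$; this does not affect the argument, since the relevant fact is simply that $\phi(z)$ has no zeros in the closed unit disk.
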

\begin{proof}
According to lemma \ref{lemma1} the process (\ref{arp2}) is stationary, meaning $X_t$ are identically distributed random variables with finite variance $\text{var}(X_t) = \sigma_X ^2 < \infty \ \forall t$. If $Z_t$ is Gaussian, then $X_t$ are identically distributed normal variables. Let us denote $\mu_X$ the mean of this distribution. Taking expectation of both sides of (\ref{arp2}) gives

\begin{align*}
\mathbb{E}[X_t] &= \sum_{k = 1}^p \tilde{\phi}_k \mathbb{E}[X_{t-k}] + \mathbb{E}[Z_t] \ \Rightarrow\\
\mu_X &= \sum_{k = 1}^p \tilde{\phi}_k \mu_X \ \Rightarrow\\
\mu_X& (1 - \sum_{k = 1}^p \tilde{\phi}_k) = 0.
\end{align*}
$(1 - \sum_{k = 1}^p \tilde{\phi}_k) \ne 0$ since polynomial (\ref{pz2}) does not have a root $z = 1$, therefore $\mu_X = 0$.
\end{proof}

We established that under Gaussian white noise $Z_t$ the process (\ref{arp2}) represents a series of identically distributed normal variables with zero mean and finite variance. From the linear form of the process it is clear that the variance $\sigma_X^2$ linearly depends on the variance $\sigma_Z^2$ of a white noise component. Scaling $\sigma_Z^2$ by a factor $\beta > 0$ results in scaling $\sigma_X^2$ by the same factor $\beta$. Therefore, it should be possible to pick $\sigma_Z$ such that the variance $\sigma_X^2$ would have any desired positive value, in particular, value of 1. The following lemma formalizes this observation.

\begin{lemma}
\label{lemma3}
For any $p \in \mathbb{N}$ let $\{\phi_k, k = 0, \dots, p\}$ be a set of coefficients corresponding to a stationary AR-p process $\{X_t\}$ defined in (\ref{arp}). Then the linear system 

\begin{equation}
\label{ywst}
\begin{aligned}
& \begin{bmatrix}
\gamma_1 \\
\gamma_2 \\
\gamma_3 \\
\vdots \\
\gamma_p
\end{bmatrix}  = 
\begin{bmatrix}
1 & \gamma_1 & \gamma_2 & \ldots & \gamma_{p-1} \\
\gamma_1 & 1 & \gamma_1 & \ldots & \gamma_{p-2} \\
\gamma_2 & \gamma_1 & 1 & \ldots & \gamma_{p-3} \\
\vdots & \vdots & \vdots & \ddots & \vdots \\
\gamma_{p-1} & \gamma_{p-2} & \gamma_{p-3} & \ldots & 1 \\
\end{bmatrix} 
\begin{bmatrix}
\phi_1 \\
\phi_2 \\
\phi_3 \\
\vdots \\
\phi_p
\end{bmatrix}\\
\text{and}& \\
& 1 = \sum_{i = 1}^p \phi_i \gamma_i + \sigma_Z^2, \\
\end{aligned}
\end{equation}
has a unique solution $(\tilde{\gamma}_1, \ldots, \tilde{\gamma}_p,\tilde{\sigma}_Z^2)$, where $\tilde{\sigma}_Z^2 > 0$. Furthermore, the autoregressive process

\begin{equation}
\begin{aligned}
\tilde{X}_t &= \sum_{k = 1}^p \phi_k \tilde{X}_{t-k} + \tilde{Z}_t,\\
\text{var}(\tilde{Z}_t)&=\tilde{\sigma}_Z^2 \\
\end{aligned}
\end{equation}
is stationary with variance $\tilde{\sigma}_X^2 = \text{var}(\tilde{X}_t) = 1$. 

\end{lemma}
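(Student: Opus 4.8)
The plan is to reduce this statement to a rescaling of the standard Yule--Walker solution, exploiting the degree-one homogeneity of the system (\ref{yw}) in the variables $(\gamma_0, \gamma_1, \ldots, \gamma_p, \sigma_Z^2)$. The key observation is that multiplying every $\gamma_k$ and $\sigma_Z^2$ by a common factor $\beta > 0$ leaves both the matrix block and the scalar equation of (\ref{yw}) satisfied, since each term is linear in these variables. Solutions therefore come in one-parameter scaling families, and imposing $\gamma_0 = 1$ merely selects the representative of such a family whose process variance is $1$.

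First I would fix a reference noise variance, say $\sigma_Z^2 = 1$. Because the coefficients $\{\phi_k\}$ correspond to a stationary process by hypothesis, the standard system (\ref{yw}) admits a unique solution $(\gamma_0, \gamma_1, \ldots, \gamma_p)$ for this choice, by the uniqueness statement recalled following (\ref{yw}). Next I would argue $\gamma_0 > 0$: since $Z_t$ is uncorrelated with $X_{t-1}, \ldots, X_{t-p}$ (each $X_{t-k}$ being a function of noises strictly prior to time $t$), taking the variance of (\ref{arp}) gives $\gamma_0 = \mathrm{var}\!\left(\sum_{k=1}^p \phi_k X_{t-k}\right) + \sigma_Z^2 \ge \sigma_Z^2 > 0$. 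I would then rescale by $\beta = 1/\gamma_0$, setting $\tilde{\gamma}_k = \gamma_k/\gamma_0$ and $\tilde{\sigma}_Z^2 = 1/\gamma_0$. By homogeneity this rescaled tuple satisfies (\ref{yw}) with $\gamma_0$ replaced by $1$, which is exactly the system (\ref{ywst}); moreover $\tilde{\sigma}_Z^2 = 1/\gamma_0 > 0$, yielding existence.

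For uniqueness I would run the correspondence in reverse: any solution $(\gamma_1, \ldots, \gamma_p, \sigma_Z^2)$ of (\ref{ywst}) yields, together with $\gamma_0 = 1$, a solution of the standard system (\ref{yw}) at noise level $\sigma_Z^2$. Since that system determines $\gamma_0$ as a homogeneous linear function of $\sigma_Z^2$, say $\gamma_0 = c\,\sigma_Z^2$ with $c > 0$, the constraint $\gamma_0 = 1$ forces $\sigma_Z^2 = 1/c$ uniquely, and the remaining $\gamma_k$ are then pinned down by the uniqueness in (\ref{yw}). Finally, the variance claim follows because the genuine autocovariances of the process with coefficients $\{\phi_k\}$ and noise variance $\tilde{\sigma}_Z^2$ must satisfy (\ref{yw}); by the uniqueness just established they coincide with $(1, \tilde{\gamma}_1, \ldots, \tilde{\gamma}_p)$, so $\mathrm{var}(\tilde{X}_t) = \tilde{\gamma}_0 = 1$.

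The main obstacle, I expect, is not any single computation but making the homogeneity and the positivity of $\gamma_0$ rigorous, and ensuring that fixing $\gamma_0 = 1$ is genuinely a well-posed rescaling rather than an over-determined constraint. In particular, I would need to confirm that the proportionality constant $c$ relating $\gamma_0$ to $\sigma_Z^2$ is strictly positive---which the variance inequality above guarantees---so that the map $\sigma_Z^2 \mapsto \gamma_0$ is invertible. Everything else is bookkeeping once stationarity and the uniqueness statement for (\ref{yw}) are taken as given.
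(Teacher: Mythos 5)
Your proposal is correct and follows essentially the same route as the paper's proof: invoke uniqueness of the Yule--Walker solution for a fixed noise variance, exploit the degree-one homogeneity of the system in $(\gamma_0,\ldots,\gamma_p,\sigma_Z^2)$, use $\gamma_0>0$ to rescale so that $\gamma_0=1$, and identify the rescaled tuple with the autocovariances of the unit-variance process. Your explicit justification that $\gamma_0\ge\sigma_Z^2>0$ via the orthogonality of $Z_t$ to the past is a small refinement over the paper, which simply asserts $\gamma_0=\mathrm{var}(X_t)>0$.
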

\begin{proof}
For any $\tilde{\sigma}_Z^2 > 0$ stationarity of $\{\tilde{X}_t\}$ follows from stationarity of $\{X_t\}$, since both processes share the same coefficients $\{\phi_k\}$, and therefore, the same characteristic polynomial. 

Since $\{X_t\}$ is stationary, the corresponding system of Yule-Walker equations (\ref{yw}) has a unique solution with respect to $(\gamma_0, \gamma_1, \ldots, \gamma_p)$ \cite[Section 3.1]{brockwell2002introduction}. Notice, however, that the system (\ref{yw}) is homogenous with respect to the variables $(\gamma_0, \gamma_1, \ldots, \gamma_p, \sigma_Z^2)$, meaning that if $(\gamma_0^{\prime}, \gamma_1^{\prime}, \ldots, \gamma_p^{\prime}, \sigma_Z^{\prime 2})$ is a solution, then $(\beta \gamma_0^{\prime}, \beta \gamma_1^{\prime}, \ldots, \beta \gamma_p^{\prime}, \beta \sigma_Z^{\prime2})$ is also a solution $\forall \beta \in \mathcal{R}$.
Therefore, since $\gamma_0 = \text{var}(X_t) > 0$, exists a unique solution $(\tilde{\gamma}_0, \tilde{\gamma}_1, \ldots, \tilde{\gamma}_p, \tilde{\sigma_Z}^2)$ such that $\tilde{\gamma}_0 = 1$. We can find it by substituting $\gamma_0 = 1$ into a linear system (\ref{yw}), resulting in (\ref{ywst}). This solution corresponds to a stationary process $\{\tilde{X}_t\}$ with $\text{var}(\tilde{X}_t) = \tilde{\gamma}_0 = 1 \ \forall t$.

\end{proof}
Now the proof of proposition \ref{theorem} is straightforward. By lemma \ref{lemma1} coefficients $\{\tilde{\phi}_k\}$ correspond to a stationary autoregressive process, therefore by lemma \ref{lemma3} the system (\ref{ywst}) has a unique solution $\tilde{\sigma}_Z^2 > 0$ and the process (\ref{arpn}) is a stationary process with unit variance. Since $Z_t$ is normal, by lemma (\ref{lemma2}) the process (\ref{arpn}) is Gaussian with zero mean. Therefore, $X_t \sim \mathcal{N}(0, 1) \ \forall t$.

\section{Example of an AR-3 Gaussian process}
\label{app:ar3}
A third order process AR-3 defined by (\ref{arpn}) has a form:
\begin{align*}
X_t &= (\alpha_1 + \alpha_2 + \alpha_3)X_{t-1}  - (\alpha_1 \alpha_2 + \alpha_2\alpha_3 + \alpha_1\alpha_3)X_{t-2} + \alpha_1 \alpha_2 \alpha_3 X_{t-3} + \sigma_Z \mathcal{N}(0, 1),
\end{align*}

where $\sigma_Z^2$ is a solution of a system:

\begin{align*}
& \begin{bmatrix}
\gamma_1 \\
\gamma_2 \\
\gamma_3 \\
\end{bmatrix}  = 
\begin{bmatrix}
1 & \gamma_1 & \gamma_2 \\
\gamma_1 & 1 & \gamma_1 \\
\gamma_2 & \gamma_1 & 1 \\
\end{bmatrix} 
\begin{bmatrix}
\phi_1 \\
\phi_2 \\
\phi_3 \\
\end{bmatrix},\\
&\phi_1 = \alpha_1 + \alpha_2 + \alpha_3\\
&\phi_2 =  - (\alpha_1 \alpha_2 + \alpha_2\alpha_3 + \alpha_1\alpha_3)\\
&\phi_3 = \alpha_1 \alpha_2 \alpha_3\\
\text{and}& \\
& 1 = \sum_{i = 1}^3 \phi_i \gamma_i + \sigma_Z^2,
\end{align*}

resulting in
$$
\sigma_Z^2 = \frac{(1 - \alpha_1^2)(1 - \alpha_2^2)(1 - \alpha_3^2)(1 - \alpha_1 \alpha_2)(1 - \alpha_2\alpha_3)(1 - \alpha_1\alpha_3)}{(1 + \alpha_1 \alpha_2 + \alpha_2\alpha_3 + \alpha_1 \alpha_3 - \alpha_1  \alpha_2  \alpha_3  (\alpha_1\alpha_2\alpha_3 + \alpha_1 + \alpha_2 + \alpha_3))}.$$

For any $\alpha_1, \alpha_2, \alpha_3 \in [0, 1)$ this process is stationary with $X_t \sim \mathcal{N}(0, 1)\ \forall t$.

If $\alpha_1 = \alpha_2 = \alpha_3 = \alpha$, the process reduces to 

\begin{align*}
X_t &= 3 \alpha X_{t-1}  - 3 \alpha^2 X_{t-2} + \alpha^3 X_{t-3} + \sigma_Z \mathcal{N}(0, 1),\\
\sigma_Z^2 &= \frac{(1 - \alpha^2)^6}{1 + 3 \alpha^2 - 3 \alpha^4 - \alpha^6}.
\end{align*}

\section{Learning results in simulation with ARPs and OpenAI Baselines TRPO}
\label{app:trpo}
We ran a set of experiments with ARPs ($p$ = 3) and OpenAI Baselines TRPO algorithm. Figures \ref{fig:trpo_mujoco} and \ref{fig:trpo_square} show learning curves in Mujoco and Square environments respectively. The hyper-parameters are specified in the Appendix \ref{app:params}. TRPO delivered a similar performance to PPO in Mujoco tasks (with a more stable performance on a Reacher-v2 task), however PPO produced better results on a sparse reward Square environment.
\begin{figure}[tbh]
\centering
\begin{subfigure}{0.9\textwidth}
  \centering
  \includegraphics[width=1\textwidth]{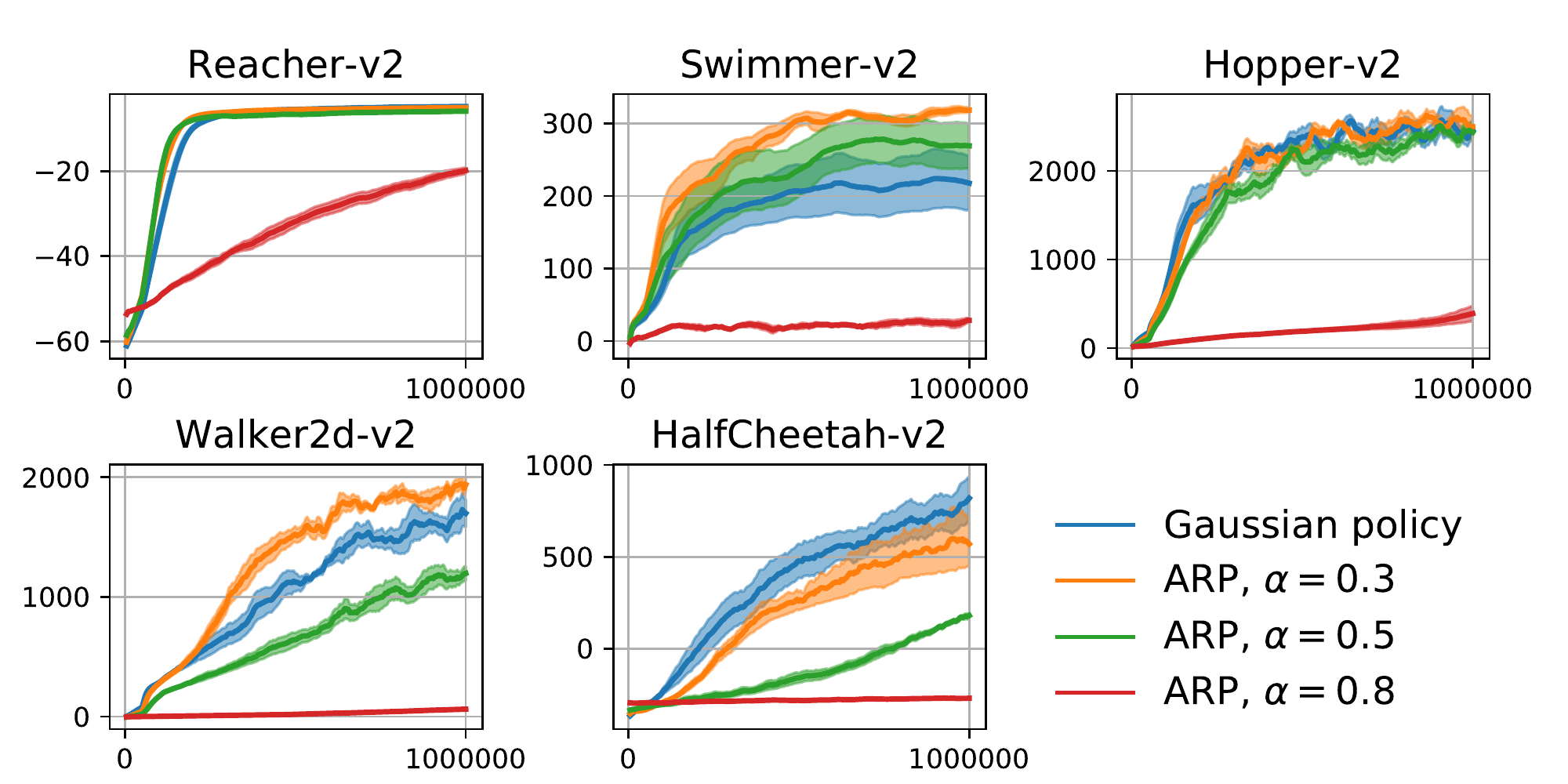}
   \caption{}
  \label{fig:trpo_mujoco}
\end{subfigure}%

\begin{subfigure}{1\textwidth}
  \centering
  \includegraphics[width=1\textwidth]{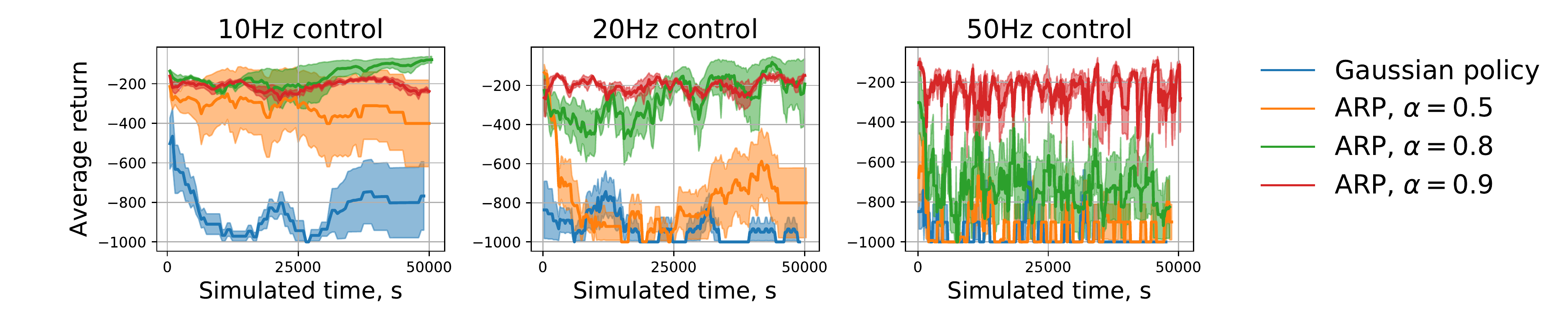}
  \caption{}
  \label{fig:trpo_square}
\end{subfigure}
\caption{Learning curves in (a) Mujoco-based, (b) Square environments with OpenAI Baselines TRPO.}
\end{figure}
\section{Algorithm parameters}
\label{app:params}
In our experiments we used the default Open AI Baselines parameters specified in \url{https://github.com/openai/baselines/blob/master/baselines} with the exception of $\gamma$ and $\lambda$ parameters, for which we used a larger value of 0.995. For experiments in Square environment at 10Hz action rate we used 4 times larger batch and optimization batch sizes to account for longer episodes. For Square and UR5 Reacher 2D environments at higher action rates we used the same parameter values as for basic versions (10Hz and 25Hz respectively), but scaled the batch and the optimization batch sizes accordingly to make sure that the data within a batch at all action rates corresponds to the same amount of simulated time (e.g. for UR5 Reacher 2D at 125Hz we used 5 times bigger batch and optimization batch compared to those in UR5 Reacher 2D at 25Hz). In all experiments we used fully connected networks with the same hidden sizes  to parametrize policy and value networks. For each experiment identical parameters and network architectures were used for standard Gaussian policy and ARP. The table below shows the parameters values for basic versions of the environments: 

\textbf{PPO:}

\begin{center}
\begin{tabular}{ |c|c|c|c| } 
 \hline 
 Hyper-parameter & Square at 10Hz & Mujoco and UR5 Reacher 2D at 25Hz \\ 
 \hline 
 batch size & 8192  & 2048 \\
 \rowcolor{Gray}
 step-size & $4 \times 10^{-3}$ & $4 \times 10^{-3}$ \\ 
 opt. batch size & 256 & 64 \\
  \rowcolor{Gray}
 opt. epochs & 10 & 10 \\ 
  $\gamma $ & 0.995 & 0.995 \\
  \rowcolor{Gray}
$\lambda $ & 0.995 & 0.995 \\
 clip. $\varepsilon$ & 0.2 & 0.2 \\
 \rowcolor{Gray}
 hidden layers & 2 & 2 \\
 hidden sizes & 64 & 64 \\
 \hline
\end{tabular}
\end{center}

\textbf{TRPO:}
\begin{center}
\begin{tabular}{ |c|c|c|c| } 
 \hline 
 Hyper-parameter & Square at 10Hz & Mujoco\\ 
 \hline 
 batch size & 8192  & 1024\\
 \rowcolor{Gray}
 max-kl & 0.01 & 0.01\\ 
 cg-iters & 10 & 10\\
  \rowcolor{Gray}
 vf-iters & 5 & 5\\ 
 vf-step-size & $10^{-3}$ & $10^{-3}$\\
 \rowcolor{Gray}
  $\gamma $ & 0.995 & 0.995\\
 
$\lambda $ & 0.995 & 0.995\\ 
 \rowcolor{Gray}
 hidden layers & 2 & 2\\
 hidden sizes & 64 & 64\\
 \hline
\end{tabular}
\end{center}

\section{Implementation details}
\label{app:details}
The equation (\ref{policy}) in the main text, replicated for convenience below, defines a stationary autoregressive policy distribution in $\tilde{M}^p$ under given parameters $\theta$:
\begin{align*}
\pi_{\theta}(a_t|\tilde{s}_t) &= \mathcal{N}(\mu_{\theta}(s_t) + \sigma_{\theta}(s_t) f_{\theta}(\tilde{s}_t),\ \sigma^2_{\theta}(s_t)\tilde{\sigma}^2_Z I),\\
f_{\theta}(\tilde{s}_t) &= \sum_{k = 1}^p \tilde{\phi}_k \frac{a_{t-k} - \mu_{\theta}(s_{t-k})}{\sigma_{\theta}(s_{t-k})},\\
\tilde{\phi}_k,\ \tilde{\sigma}^2_Z &\text{ are defined by (\ref{arpn2})}.
\end{align*}
The two aspects that need to be considered separately for a practical application are the initialization at the start of a roll-out and the learning updates. 

At $t < p$, we do not include in the model the terms $\tilde{\phi}_k \frac{a_{t-k} - \mu_{\theta}(s_{t-k})}{\sigma_{\theta}(s_{t-k})}$ where $t - k < 0$. This corresponds to having $X_{t-k} = 0,\ t-k < 0$ in the underlying AR process. Initializing an AR process with zero values results in a well behaved time-series that quickly equilibrates to the stationary distribution without large spikes in values. In contrast, initializing from arbitrary values often results in temporal spikes of large values before the process equilibrates to its stationary behaviour. This restriction can be expressed in the following policy formulation:   
\begin{align*}
\pi_{\theta}(a_t|\tilde{s}_t) &= \mathcal{N}(\mu_{\theta}(s_t) + \sigma_{\theta}(s_t) f_{\theta}(\tilde{s}_t),\ \sigma^2_{\theta}(s_t)\tilde{\sigma}^2_Z I),\\
f_{\theta}(\tilde{s}_t) &= \sum_{k = 1}^{\min(p, t)} \tilde{\phi}_k \frac{a_{t-k} - \mu_{\theta}(s_{t-k})}{\sigma_{\theta}(s_{t-k})},\\
\tilde{\phi}_k,\ \tilde{\sigma}^2_Z &\text{ are defined by (\ref{arpn2}).}
\end{align*}
Formally, dependence on $t$ makes the policy non-stationary, however the term $t$ plays a role only within the first $p$ steps in each sample path, and, as long as the exact action $a_0$ is not encountered again in the sample path, which is the case for continuous action spaces,  the policy could be equivalently formulated by having the agent count number of occurences of $a_0$ in each state $\tilde{s}_t$ and based on that decide on the number of terms in $f_{\theta}(\tilde{s}_t)$. Such formulation would make the policy stationary, however it would not allow a similarly compact representation and would unnecessarily burden the formulation. Notice, that proposed formulation also means that $a_0$ will never be part of policy computation, and therefore will not affect future transitions.

As we pointed out in the main text, if the learning updates are performed within an episode, the change in the parameters $\theta$ can temporally distort stationarity of the underlying AR process, since the terms $\tilde{\phi}_k \frac{a_{t-k} - \mu(s_{t-k}, \theta_{\text{new}})}{\sigma(s_{t-k}, \theta_{\text{new}})}$ change their values compared to $\tilde{\phi}_k \frac{a_{t-k} - \mu(s_{t-k}, \theta_{\text{old}})}{\sigma(s_{t-k}, \theta_{\text{old}})}$. This may cause temporal spikes in values of $f_{\theta}(\tilde{s}_t)$ until the process equilibrates again. To avoid this, corresponding previously computed values of terms $\tilde{\phi}_k \frac{a_{t-k} - \mu(s_{t-k}, \theta_{\text{old}})}{\sigma(s_{t-k}, \theta_{\text{old}})}$ where $t - k < t_{\text{update}}$ should be used in computing $f_{\theta}$ for the next $p$ steps after an update.

\end{document}